\documentclass[13pt]{article}
\usepackage{latexsym}
\usepackage{geometry}
\usepackage{graphicx}
\usepackage{amsmath, amssymb, amsthm}
\usepackage{booktabs}
\usepackage{algorithm}
\usepackage{algorithmic}

\usepackage[utf8]{inputenc} 
\usepackage[T1]{fontenc}    

\usepackage{url}
\usepackage{natbib}

\usepackage{appendix}

\usepackage{amsfonts}
\usepackage{multirow}
\usepackage{multicol}

\usepackage{color}

\usepackage{xcolor,colortbl}
\definecolor{LightCyan}{rgb}{0.88,1,1}

\usepackage{subfigure}
\usepackage{hyperref}
\usepackage{tcolorbox}

\usepackage{subcaption}
\usepackage{booktabs}
\usepackage{tabularx}
\usepackage{array}
\usepackage{amsmath}
\usepackage[T1]{fontenc}
\usepackage{newtxtext,newtxmath}

\usepackage{color}
\usepackage{xcolor,colortbl}

\def\R{\mathbb{R}}
\def\E{\mathbb{E}}

\def\F{{\rm F}}

\newcommand{\tr}{\mbox{tr}}

\newtheorem{theorem}{Theorem}
\newtheorem{lemma}{Lemma}

\newtheorem{definition}{Definition}
\newtheorem{assumption}{Assumption}
\newtheorem{remark}{Remark}

\begin{document}
\title{ MiMuon: Mixed Muon Optimizer with Improved  Generalization for Large Models }
\author{Feihu Huang\thanks{Feihu Huang is with College of Computer Science and Technology,
Nanjing University of Aeronautics and Astronautics, Nanjing, China;
and also with MIIT Key Laboratory of Pattern Analysis and Machine Intelligence, Nanjing, China. Email: huangfeihu2018@gmail.com}, \
Yuning Luo\thanks{Yuning Luo is with College of Design and Engineering, National University of Singapore, Singapore.}, \
Songcan Chen\thanks{Songcan Chen is with College of Computer Science and Technology,
Nanjing University of Aeronautics and Astronautics, Nanjing, China;
and also with MIIT Key Laboratory of Pattern Analysis and Machine Intelligence, Nanjing, China.}
 }

\date{}
\maketitle

\begin{abstract}
 Matrix-structured parameters frequently appear in many artificial intelligence models such as large language models. More recently, an efficient Muon optimizer is designed for matrix parameters of large-scale models, and
 shows markedly faster convergence than the vector-wise algorithms. Although some works have begun to study convergence properties (i.e., optimization error) of the Muon optimizer, its generalization properties (i.e., generalization error) is still not established. Thus, in this paper, we study generalization error of the Muon optimizer based on algorithmic stability and mathematical induction, and prove that the Muon has a generalization error of $O\big(\frac{1}{N\kappa^{T}}\big)$, where $N$ is training sample size, and $T$ denotes iteration number, and $\kappa>0$ denotes minimum difference between singular values of gradient estimate. To enhance generalization of the Muon, we propose an effective mixed Muon (MiMuon) optimizer by cautiously using orthogonalization of gradient, which is a hybrid of Muon and momentum-based SGD optimizers. Then we prove that our MiMuon optimizer has a lower generalization error of $O\big(\frac{1}{N}\big)$ than $O\big(\frac{1}{N\kappa^{T}}\big)$ of Muon optimizer, since $\kappa$ generally is very small. Meanwhile, we also studied the convergence properties of our MiMuon algorithm, and prove that our MiMuon algorithm has the same convergence rate of $O(\frac{1}{T^{1/4}})$ as the Muon algorithm. Some numerical experimental results on training large models including Qwen3-0.6B and YOLO26m demonstrate efficiency of the MiMuon optimizer.
\end{abstract}

\section{Introduction}
\vspace*{-6pt}
Efficient optimization algorithms~\cite{bottou2018optimization} have been widely received attention in training or inferring large-scale Artificial Intelligence (AI) models such as deep learning models.
Over the past decade, training or inferring large-scale AI models has been dominated by
the vector-wise optimization methods including Stochastic Gradient Descent (SGD)~\citep{robbins1951stochastic}, momentum-based SGD (SGDM)~\citep{sutskever2013importance}, Adam~\cite{kingma2014adam}
and AdamW~\cite{loshchilov2017decoupled} algorithms. Although these vector-wise optimization
algorithms are cornerstone in training AI models, they ignore the inherent matrix structure of
parameters in AI models such as the convolutional layers in Convolutional Neural Networks (CNNs)~\cite{krizhevsky2017imagenet}, and the query, key,
and value projections in transformers~\cite{vaswani2017attention}, which results in suboptimal convergence efficiency.
Recently, some works have begun to study the matrix-wise optimizer to solve the matrix-structured models. For example, the Shampoo~\cite{gupta2018shampoo} optimizer and its variants~\cite{duvvuri2024combining} have been developed by applying left and right preconditioning matrices, which
have demonstrated comparable performance to the popular vector-wise optimizers.

More recently, a typical matrix-wise optimizer, i.e., Muon~\cite{jordanmuon} has been proposed to train the large-scale  matrix-structured models, which shows markedly faster convergence than the vector-valued algorithms
in training matrix parameters of large models~\cite{liu2025muon}. A key feature of Muon optimizer uses orthogonalized gradient to update the matrix parameters. Subsequently, \cite{su2025isotropic} studied the orthogonalization property of Muon, which shows that the orthogonalized gradient becomes optimal for the
isotropic curvature model when the curvature exhibits a phase transition in growth.
\cite{ma2026preconditioning} proved that simplified Muon converges linearly
with iteration complexities independent of the relevant condition number, provably
outperforming gradient descent and Adam.
Meanwhile, some effective variants of Muon optimizer~\cite{sfyraki2025lions,riabinin2025gluon,huang2025limuon,he2025low,liu2025mars,qian2025muon,lang2026powering} have been proposed. For example, \cite{riabinin2025gluon} designed a stochastic adaptive layer-wise Muon optimizer. \cite{refael2025sumo,huang2025limuon} proposed a class of memory-efficient Muon algorithms by using low-rank approximated momentum. \cite{lang2026powering} proposed a zeroth-order variant of Muon optimizer for black-box optimization.

\begin{table*}
	\centering
	\caption{ \textbf{Generalization error} comparison of our MiMuon optimizer and other optimizers for nonconvex optimization. Here $N$ denotes the training dataset size, and $\kappa>0$ generally is very small.  }
	\label{tab:1}
	\resizebox{0.78\textwidth}{!}{
		\begin{tabular}{c|c|c|c}
			\hline
			\textbf{Algorithm} & \textbf{Reference} & \textbf{Generalization Error} & \textbf{Orthogonalization}
			\\ \hline
			SGD  & \cite{hardt2016train} & $O(\frac{1}{N})$  &  \\   \hline
			SGDM  & \cite{ramezani2024generalization} & $O(\frac{1}{N})$  &  \\  \hline
			Muon  & \cite{jordanmuon} & $O\big(\frac{1}{N\kappa^{T}}\big)$  & $\checkmark$  \\  \hline
			MiMuon & Ours &  \textcolor{red}{$O(\frac{1}{N})$} & \textcolor{red}{$\checkmark$}  \\  \hline
		\end{tabular}
	}
\end{table*}

Recently, the convergence properties of Muon optimizer have received attention at some works.
For example, \cite{shen2025convergence,li2025note,chang2025convergence,kovalev2025understanding}
proved that the Muon optimizer converge to
find a stationary solution of nonconvex optimization under the standard smoothness condition.
\cite{pethick2025training,sfyraki2025lions} provided convergence analysis of the Muon optimizer based on the stochastic Frank-Wolfe framework~\cite{hazan2016variance}, and proved that the Muon converge to
find a stationary solution of nonconvex stochastic constrained optimization.
\cite{sfyraki2025lions,huang2025limuon,liu2025mars,qian2025muon} analyzed the convergence properties of the variance reduced Muon algorithm. Subsequently, \cite{riabinin2025gluon,pethick2025generalized,huang2025limuon} have studied convergence properties of the Muon optimizer and its variants under generalized smoothness condition.
\cite{sfyraki2025lions,yu2026sign} studied the convergence properties of Muon algorithm under heavy-tailed noise setting. More recently, \cite{kim2026convergence} provided the convergence analysis for Muon algorithm with Newton-Schulz iteration.

So far, the existing works only focus on analyzing convergence properties (i.e., optimization error) of the Muon optimizer and its variants, but its generalization properties (i.e., generalization error) is still unexplored.
More recently, although \cite{vasudeva2025generalization} proved that spectral gradient descent (SpecGD) dominates gradient descent when learning linear models on class-imbalanced data,
generalization of the SpecGD still is missing.
It is well known that optimization error~\cite{bottou2018optimization} only shows
how quickly the empirical training loss decreases. In fact, the ultimate goal of AI models has a good performance on unseen example. Generalization error~\cite{shalev2010learnability,hardt2016train,zhang2023mathematical} measures
the gap between training loss and population loss, and a small generalization error shows strong performance on unseen example. However, it generally can remain large even
when optimization error is small, leading to overfitting of AI models.
Recently, although optimization error of the Muon optimizer has been studied, its rigorous generalization analysis is still missing. To fill this gap, in the paper,
we study generalization error of the Muon optimizer, and
propose an effective mixed Muon optimizer by cautiously using orthogonalization of gradient to improve generalization of the Muon optimizer.
\vspace*{-6pt}
\subsection*{Contributions}
\vspace*{-6pt}
In the paper, our main contributions are given as follows:
\begin{itemize}
	\item[1)] We study generalization error of the Muon optimizer based on algorithmic stability and mathematical induction, and prove that the Muon has a generalization error of $O\big(\frac{1}{N\kappa^{T}}\big)$, where $N$ denotes training sample size, and $T$ is iteration number, and $\kappa>0$ denotes minimum difference between singular values of gradient estimator that affects stability of Muon algorithm.
	\item[2)] To improve generalization of the Muon, we propose an effective mixed Muon (MiMuon) optimizer by cautiously using orthogonalization of gradient. Then we prove that our MiMuon optimizer has a lower generalization error of $O\big(\frac{1}{N\kappa^{T}}\big)$ than $O\big(\frac{1}{N}\big)$ of Muon optimizer, since $\kappa>0$ generally is very small.
	\item[3)] We study the convergence properties of our MiMuon algorithm, and prove that our MiMuon algorithm has the same convergence rate of $O(\frac{1}{T^{1/4}})$ as the Muon algorithm~\cite{li2025note,chang2025convergence}.
\end{itemize}
From the above Table~\ref{tab:1}, we can find that the Muon optimizer still
has a higher generalization error of $O\big(\frac{1}{N\kappa^{T}}\big)$
than $O\big(\frac{1}{N}\big)$ of both the SGD~\citep{hardt2016train}
and SGDM~\citep{ramezani2024generalization},
since $\kappa$ is generally very small.
This implies that the Muon optimizer
sometimes has a worse generalization than that of SGD and SGDM. Meanwhile,
\cite{dragutinovic2026use} shown that the Muon sometimes
has some disadvantages due to removing its simplicity bias.
However, our MiMuon is a hybrid of Muon and SGDM optimizers,
which incorporates the advantages of both Muon and SGDM.
From Table~\ref{tab:1}, our MiMuon algorithm has the same generalization error of $O\big(\frac{1}{N}\big)$ as
that of both SGD and SGDM. More recently, YOLO26~\cite{sapkota2025yolo26} introduced an effective MuSGD optimizer to obtain stable convergence across diverse datasets,
which the Muon update directly plus SGD update at each iteration.
\vspace*{-6pt}
\section*{Notations}
$\R^+$ denotes non-negative real number set. Let $[N]=\{1,2,\cdots,N\}$.
For matrices $X,Y\in \R^{m\times n}$, Frobenius inner product defined
as $\langle X,Y\rangle = \tr(X^\top Y)$ and Frobenius norm defined as $\|X\|_\F=\sqrt{\tr(X^\top X)}$,
where $\tr(\cdot)$ denotes the trace of a square matrix.
Given a matrix $X\in \R^{m\times n}$, $\|X\|_*$ denotes its nuclear norm, and
$\|X\|_2$ denotes its spectral norm.
$X^\top$ denotes the transpose of matrix $X$.
$a_t=O(b_t)$ denotes that $a_t \leq c b_t$ for some constant $c>0$.

\section{Preliminaries}
\subsection{Problem Setup}
In the paper, we consider solving the following
stochastic matrix-structured optimization problem
\begin{align}\label{eq:p1}
	\min_{W\in \R^{m\times n}} F(W) = \E_{\xi\sim \mathcal{D}}[f(W;\xi)],
\end{align}
where $f(W;\xi): \R^{m\times n}\rightarrow \R^+$ denotes a loss on the sample $\xi\sim \mathcal{D}$,
which is possibly  non-convex function. $W\in \R^{m\times n}$ denotes a parameter matrix, and $\xi$ is a random variable drawn
some fixed but unknown distribution $\mathcal{D}$. Here $F(W) = \E_{\xi\sim \mathcal{D}}[f(W;\xi)]$
denotes a population loss (risk) of AI tasks such as training large language models.
In practice, the fixed distribution $\mathcal{D}$ is
unknown, so we only have access to a finite set of
training examples $S=\{\xi_1,\xi_2,\cdots,\xi_N\}$ drawn i.i.d. from $\mathcal{D}$.
Then the population risk $F(W)$ can be approximated by
the following empirical risk
\begin{align}
	F_S (W) = \frac{1}{N}\sum_{i=1}^N f(W;\xi_i).
\end{align}

\begin{algorithm}
	\caption{ \textbf{Muon} Algorithm~\cite{jordanmuon}}
	\label{alg:1}
	\begin{algorithmic}[1]
		\STATE \textbf{Input}: $\eta>0$, $\beta \in (0,1]$ and $\lambda\geq 0$;
		\STATE \textbf{Initialize:} $W_0\in \R^{m\times n}$,
		and $M_0=0$;
		\FOR{$t = 1, 2, \ldots, T$}
		\STATE  Draw a sample $\xi_{t} \sim \mathcal{D}$;
		\STATE $M_{t} = \beta\nabla f(W_{t-1};\xi_{t}) + (1-\beta)M_{t-1}$;
		\STATE $(U_{t}, \Sigma_{t}, V_{t}) = \text{SVD}(M_{t})$;  (// using Newton-Schulz iterations \textbf{in practice})
		\STATE $W_{t} = W_{t-1} - \eta (U_{t}V_{t}^\top + \lambda W_{t-1})$;
		\ENDFOR
		\STATE \textbf{Output}: $W_T$.
	\end{algorithmic}
\end{algorithm}

\subsection{ Definition of Generalization Error }
In this subsection, we provide a definition of generalization error.
Given a training dataset $S$, we run a
randomized algorithm $A$ to minimize the empirical risk to get a model $A(S)$.
Generally, it does not necessarily mean the output model $A(S)$ would have a
good performance on testing examples, which is measured
by the population risk $F(W) =\E_{\xi\sim \mathcal{D}}[f(W;\xi)]$.
Here, we are interested in the excess
population risk $F(A(S))-F(W^*)$, which measures the relative behavior of the output model as compared
to the best model $$W^*=\mathop{\arg\min}_{W\in \R^{m\times n}}F(W).$$
In fact, we can decompose this excess population risk into the following formation
\begin{align}
	F(A(S))-F(W^*)= \underbrace{F(A(S)) - F_S(A(S))}_{(i)}  + \underbrace{F_S(A(S)) - F_S(W^*)}_{(ii)} + F_S(W^*) -F(W^*), \label{eq:ep}
\end{align}
where the term $(i)$ $F(A(S)) - F_S(A(S))$ is generalization error (generalization gap), which measures
the gap between training loss and population loss;
and the term $(ii)$ $F_S(A(S)) - F_S(W^*)$ is optimization error, which quantifies how well the algorithm minimizes the empirical risk. Taking expectation on the inequality~(\ref{eq:ep}) with random algorithm $A$ and
training dataset $S$, we have
\begin{align}
	\E_{A,S} [F(A(S)) -F(W^*) ]= \E_{A,S} [ F(A(S)) - F_S(A(S))]  + \E_{A,S} [F_S(A(S)) - F_S(W^*)], \nonumber
\end{align}
since $\E_{A,S}[F_S(W^*) -F(W^*)]=0$, i.e., $W^*$
is independent of $A$ and $S$.

\subsection{ Algorithmic Stability }
In this subsection, we provide some definitions and properties of algorithmic stability.
For notational simplicity, let $S=\{\xi_1,\xi_2,\cdots,\xi_N\}$ and $\tilde{S}=\{\tilde{\xi}_1,\tilde{\xi}_2,\cdots,\tilde{\xi}_N\}$ be two
independent datasets drawn from the same distribution $\mathcal{D}$. Here we denote
the dataset $S^{(i)}=\{\xi_1,\xi_2,\cdots,\tilde{\xi}_i,\cdots,\xi_N\}$ by replacing the $i$-th example $\xi_i$ with an independent sample $\tilde{\xi}_i$ for any $i\in [N]$.

\begin{definition} (\textbf{Stability} in Loss Function)
	Let $A$ be a random algorithm and $A(S)$ denote the output of the algorithm $A$ run on dataset $S$. We say the random algorithm $A$ is $\epsilon$-uniform stable if for any $S$ and $S(i)$,
	\begin{align}
		\sup_{\xi} \E_A [f(A(S),\xi)-f(A(S^{(i)}),\xi)] \leq \epsilon.
	\end{align}
\end{definition}

\begin{lemma} \label{lem:gs}
	(\textbf{Generalization via Stability} in Function Values)~\cite{shalev2010learnability,hardt2016train}.
	Let algorithm $A$ be $\epsilon$-uniformly stable in function
	values. Then we have
	$$ |\E_{A,S}[F(A(S)) - F_S(A(S))]|\leq \epsilon.$$
\end{lemma}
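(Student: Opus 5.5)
The plan is the classical symmetrization argument of \cite{shalev2010learnability,hardt2016train}. We rewrite the expected population risk of $A(S)$ by swapping a training point for an independent copy, and then compare the two resulting expressions pointwise through the stability assumption. Throughout, $S=\{\xi_1,\dots,\xi_N\}$ and $\tilde S=\{\tilde\xi_1,\dots,\tilde\xi_N\}$ are independent i.i.d.\ samples from $\mathcal{D}$, and $S^{(i)}$ is defined as in this subsection.

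\textbf{Step 1 (empirical risk).} By linearity,
\begin{align*}
\E_{A,S}[F_S(A(S))]=\frac{1}{N}\sum_{i=1}^N \E_{A,S}[f(A(S);\xi_i)].
\end{align*}

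\textbf{Step 2 (population risk).} Since $\tilde\xi_i$ is independent of $S$ and of the internal randomness of $A$, we have $\E_{A,S}[F(A(S))]=\E_{A,S,\tilde S}[f(A(S);\tilde\xi_i)]$ for every $i$. The pair $(S,\tilde\xi_i)$ has the same joint law as $(S^{(i)},\xi_i)$, because swapping $\xi_i$ and $\tilde\xi_i$ is a measure-preserving exchange of i.i.d.\ variables. Hence
\begin{align*}
\E_{A,S}[F(A(S))]=\frac{1}{N}\sum_{i=1}^N \E_{A,S,\tilde S}[f(A(S^{(i)});\xi_i)].
\end{align*}

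\textbf{Step 3 (comparison).} Subtracting Step 1 from Step 2 gives
\begin{align*}
\E_{A,S}[F(A(S))-F_S(A(S))]=\frac{1}{N}\sum_{i=1}^N \E_{S,\tilde S}\Big[\E_A\big[f(A(S^{(i)});\xi_i)-f(A(S);\xi_i)\big]\Big].
\end{align*}
For fixed $S,\tilde S$, the inner quantity is bounded in absolute value by $\epsilon$. This follows from the uniform stability definition taken with $\sup_\xi$ and evaluated at $\xi=\xi_i$, applied to the pair $(S,S^{(i)})$ in both orders. Averaging over $i$ and over $(S,\tilde S)$ then yields the claimed bound $|\E_{A,S}[F(A(S))-F_S(A(S))]|\le\epsilon$.

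\textbf{Main obstacle.} The delicate step is the exchangeability argument in Step 2. One must make sure the algorithm's internal randomness (for instance the sampling order of $\xi_t$) is independent of the data, so that the swap preserves the joint law of $(A,S,\tilde\xi_i)$. One must also handle the absolute value: the definition as stated gives a one-sided bound, and the lower bound is obtained by applying it with the roles of $S$ and $S^{(i)}$ exchanged, which is legitimate because the definition quantifies over all neighbouring pairs.
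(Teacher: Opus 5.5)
Your proposal is correct and is the classical symmetrization proof of Theorem 2.2 in \cite{hardt2016train}, which is what the paper relies on here: it gives no proof of its own and simply cites \cite{shalev2010learnability,hardt2016train}. The only cosmetic difference from Hardt et al.\ is that you perform the exchange $(S,\tilde\xi_i)\leftrightarrow(S^{(i)},\xi_i)$ on the population-risk term rather than the empirical-risk term, and you correctly get the lower bound by applying the one-sided stability definition with the roles of $S$ and $S^{(i)}$ swapped.
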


\begin{algorithm}
	\caption{ \textbf{MiMuon} Algorithm}
	\label{alg:2}
	\begin{algorithmic}[1]
		\STATE \textbf{Input}: $\eta>0$, $\beta \in (0,1]$, $\lambda\geq 0$ and $\tau>0$;
		\STATE \textbf{Initialize:} $W_0\in \R^{m\times n}$
		and $M_0=0$;
		\FOR{$t = 1, 2, \ldots, T$}
		\STATE  Draw a sample $\xi_{t} \sim \mathcal{D}$;
		\STATE $M_{t} = \beta\nabla f(W_{t-1};\xi_{t}) + (1-\beta)M_{t-1}$;
		\STATE $(U_{t}, \Sigma_{t}, V_{t}) = \text{SVD}(M_{t})$;  (// using Newton-Schulz iterations \textbf{in practice}) 
		\IF {$\min_{i\neq j\in S_t} |\Sigma_{t,ii}-\Sigma_{t,jj}| \geq \tau$ with $S_t=\{i|\Sigma_{t,ii} \neq 0, 1\leq i\leq d\}$  (// using $\|M_t\|_\F \geq \tau$ \textbf{in practice}) } 
		\STATE $W_{t} = W_{t-1} - \eta (U_{t}V_{t}^\top+\lambda W_{t-1})$;
		\ELSE
		\STATE $W_{t} = W_{t-1} - \eta (M_t+\lambda W_{t-1})$;
		\ENDIF
		\ENDFOR
		\STATE \textbf{Output}: $W_T$.
	\end{algorithmic}
\end{algorithm}

\begin{figure}[ht]
	\centering
	\subfigure[SGDM]{\includegraphics[width=0.30\textwidth]{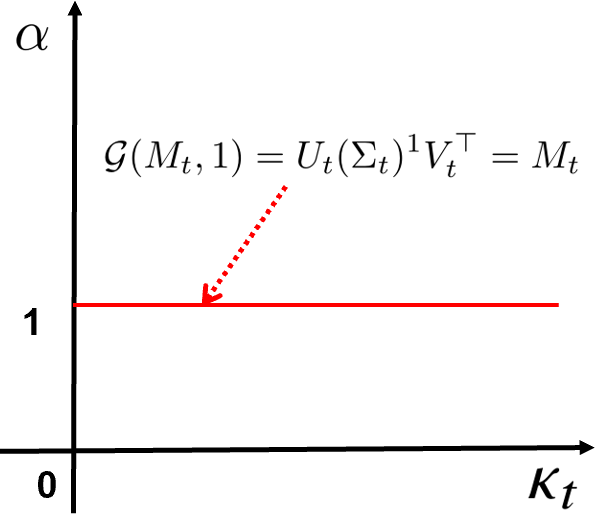}}
	\hfill
	\subfigure[Muon]{\includegraphics[width=0.30\textwidth]{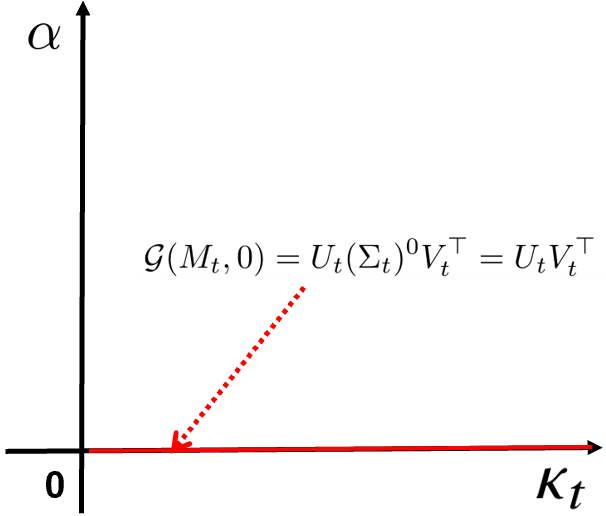}}
	\hfill
	\subfigure[MiMuon]{\includegraphics[width=0.31\textwidth]{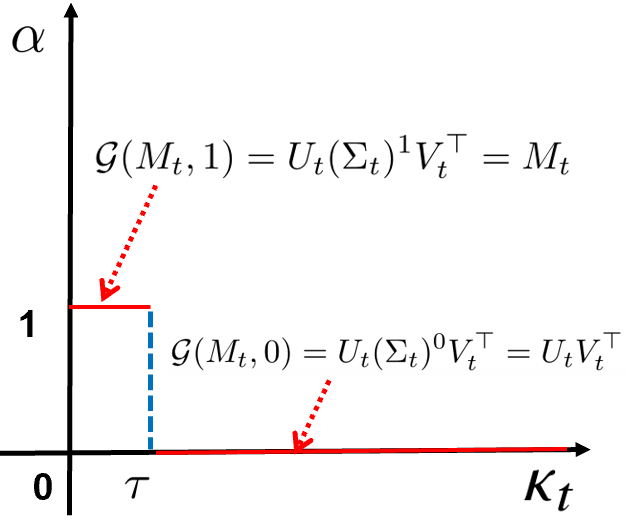}}
	\hfill
	\caption{ Illustration of different
		gradient mapping $\mathcal{G}(M_t,\alpha)$. Here $\kappa_t=\min_{i\neq j} |\sigma_i(M_t)-\sigma_j(M_t)|$.}
	\label{fig:1}
\end{figure}

\section{ Our MiMuon Optimzier }
In this section, we propose an effective mixed Muon (MiMuon) optimizer by cautiously using
orthogonalization of gradient based on the standard Muon optimizer.
We first review the Muon~\cite{jordanmuon} shown in Algorithm~\ref{alg:1}.
At line 6 of Algorithm~\ref{alg:1}, we can use the Newton-Schulz iterations~\cite{jordanmuon,bernstein2024old}
to instead of exact SVD step to approximate the orthogonalization process.
In Algorithm~\ref{alg:1}, 
we can set $\lambda=0$, $\mu = \frac{1-\beta}{\beta}$ and $\eta = \frac{\eta}{\beta}$, so form of Algorithm~\ref{alg:1} is the same as Muon algorithm given in~~\cite{jordanmuon}.

Algorithm~\ref{alg:2} shows an algorithmic framework of our MiMuon optimizer.
When updating variable $W_t$ in Algorithm~\ref{alg:2}, we cautiously uses orthogonalization of gradient estimator $M_t$. If $\min_{i\neq j\in S_t} |\Sigma_{t,ii}-\Sigma_{t,jj}| \geq \tau$ with $S_t=\{i|\Sigma_{t,ii} \neq 0, 1\leq i\leq d\}$, we use the orthogonalized gradient descent to update variable $W_{t}$ as follows:
\begin{align} \label{eq:5}
	W_{t} = (1-\eta\lambda)W_{t-1} - \eta U_{t}V_{t}^\top,
\end{align}
where $\eta>0$ and $0\leq \lambda <\frac{1}{\eta} $.
Otherwise, we the gradient descent to update variable $W_{t}$ as follows:
\begin{align} \label{eq:6}
	W_{t} = (1-\eta\lambda)W_{t-1} - \eta M_t.
\end{align}

Here we define a useful gradient mapping $\mathcal{G}(\cdot,\cdot): \R^{m\times n}\times \R \rightarrow \R^{m\times n}$ and set $\mathcal{G}(M_t,\alpha)= U_{t}(\Sigma_{t})^\alpha V_{t}^\top$, where
$(\Sigma_{t})^\alpha$ denotes an element-wise power operation for non-zero elements of matrix $\Sigma_{t}$. Then
we can uniformly rewrite the above equations~(\ref{eq:5}) and~(\ref{eq:6}) as follows:
\begin{align} \label{eq:7}
	W_{t} = (1-\eta\lambda)W_{t-1} - \eta\mathcal{G}(M_t,\alpha),
\end{align}
where $\mathcal{G}(M_t,0) = U_{t}(\Sigma_{t})^0V_{t}^\top=U_{t}V_{t}^\top$ when using orthogonalized gradient descent, and
$\mathcal{G}(M_t,1) =U_{t}(\Sigma_{t})^1V_{t}^\top= M_t$ when using gradient descent.

For notational simplicity, let $r=\min(m,n)$ denote rank of gradient estimator $M_t\in \R^{m\times n}$
for all $t\geq1$.
Meanwhile, assume $\{\sigma_i(M_t)\}_{i=1}^r$ denote
the singular-value of matrix $M_t$ and satisfies $\sigma_1(M_t)<\sigma_2(M_t)<\cdots<\sigma_r(M_t)$.
Thus, the line 7 of Algorithm~\ref{alg:2} can be rewritten as $\kappa_t=\min_{i\neq j} |\sigma_i(M_t)-\sigma_j(M_t)|\geq\tau>0$. From Figure~\ref{fig:1}, \textbf{(a)} SGDM uses $\mathcal{G}(M_t,1)= M_t$ with $\alpha=1$; \textbf{(b)} Muon uses $\mathcal{G}(M_t,0)= U_{t}V_{t}^\top$ with $\alpha=0$; \textbf{(c)} Our MiMuon uses $\mathcal{G}(M_t,0)= U_{t}V_{t}^\top$ when $\kappa_t=\min_{i\neq j} |\sigma_i(M_t)-\sigma_j(M_t)|\geq\tau>0$, otherwise $\mathcal{G}(M_t,1)= M_t$. 

Since $\sigma_1(M_t)<\sigma_2(M_t)<\cdots<\sigma_r(M_t)$
and $\kappa_t=\min_{i\neq j} |\sigma_i(M_t)-\sigma_j(M_t)|\geq\tau>0$, we have
$$\tau < \tau+\sigma_1(M_t) \leq \sigma_2(M_t)<\cdots<\sigma_r(M_t)\leq \|M_t\|_\F.$$
When the dimension of matrix $M_t$is large in Algorithm~\ref{alg:2},
we also could use the Newton-Schulz iterations to approximate the orthogonalization process
as in Muon algorithm. Under this case, we can not obtain singular value matrix $\Sigma_t$,
in other words, we also can not obtain
the term $\kappa_t=\min_{i\neq j} |\sigma_i(M_t)-\sigma_j(M_t)|$. Thus, we can use
its upper bound such as $\|M_t\|_\F$ instead of
the term $\kappa_t=\min_{i\neq j} |\sigma_i(M_t)-\sigma_j(M_t)|$ in practice.

\section{Generalization Analysis}
In the section, we provide generalization analysis for both Muon and
our MiMuon algorithms under some mild conditions, respectively.
We first give some assumptions on the problem~(\ref{eq:p1}).

\begin{assumption}[\textbf{Smoothness of Population Function}]\label{ass:s1}
	Population function $F(W)$ is $L$-Frobenius norm Lipschitz smooth, if for any $W, W'\in \R^{m\times n}$, we have
	\begin{align}
		\|\nabla F(W)-\nabla F(W')\|_\F\leq L\|W-W'\|_\F.
	\end{align}
\end{assumption}

\begin{assumption}[\textbf{Lipschitzness}]\label{ass:g}
	Function $f(W;\xi)$ for all $\xi\sim \mathcal{D}$  is $G$-Frobenius norm Lipschitz continuous, if for any $W, W'\in \R^{m\times n}$, we have
	\begin{align}
		\|f(W;\xi)-f(W';\xi)\|_\F \leq G\|W-W'\|_\F.
	\end{align}
\end{assumption}

\begin{assumption}[\textbf{Bounded Variance}]\label{ass:v}
	$\nabla f(W;\xi)$ is an unbiased stochastic estimator of the true gradient $\nabla F(W)$ and has a bounded variance, i.e., $\E[\nabla f(W;\xi)]=\nabla F(W)$ and
	\begin{align}
		\E\|\nabla f(W;\xi)-\nabla F(W)\|_\F^2]\leq \sigma^2. \nonumber
	\end{align}
\end{assumption}
Assumption~\ref{ass:s1} shows smoothness of population function $F(W)$, and is a natural extension of $l_2$-norm smoothness for functions with vector parameters to functions with matrix parameters~\cite{beck2017first}, which is a standard Frobenius norm smoothness used in~\cite{shen2025convergence,sfyraki2025lions,pethick2025training}.
Assumption~\ref{ass:g} provide the Lipschitz continuous of objective function, which is widely used in
generalization analysis~\cite{hardt2016train,lei2020fine,lei2023stability}.
Assumption~\ref{ass:v} shows a standard bounded variance assumption used in stochastic optimization~\cite{bottou2018optimization,shen2025convergence,sfyraki2025lions}.

\begin{theorem} \label{th:1}
	Assume the sequence $\{W_t,M_t\}_{t=0}^T$ is generated
	from Algorithm~\ref{alg:1} on dataset $S=\{\xi_1,\xi_2,\cdots,\xi_N\}$. Let $\{\sigma_i(M_t)\}_{i=1}^r$ denote
	the singular-value of momentum matrix $M_t$ for $t=1,2,\cdots,T$, and let $\kappa_t =  \min_{i\neq j}|\sigma_i(M_t)-\sigma_j(M_t)|>0 $.
	Under the above Assumptions~\ref{ass:s1},~\ref{ass:g} and~\ref{ass:v}, let
	$\eta=O(1)$, $\beta=O(1)$ with $\beta\in (0,1]$ and $\lambda=0(1)$ with $\lambda <\frac{1}{\eta}$, then we have
	\begin{align}
		| \E [F(W_T) - F_S(W_T)] | \leq O\Big(\frac{1}{N\kappa^{T}}\Big), \nonumber
	\end{align}
	where $\kappa = \min_{1\leq t \leq T} \kappa_t$.
\end{theorem}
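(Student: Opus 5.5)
We follow the uniform stability route of \cite{hardt2016train}. Run Algorithm~\ref{alg:1} on $S$ and on the neighbouring dataset $S^{(i)}$ with the same initialization $W_0$, $M_0=0$ and the same sampling of indices. This gives coupled sequences $\{W_t,M_t\}$ and $\{W'_t,M'_t\}$. Set $\delta_t=\E\|W_t-W'_t\|_\F$ and $\Delta_t=\E\|M_t-M'_t\|_\F$. By Assumption~\ref{ass:g},
$$\sup_\xi \E[f(W_T;\xi)-f(W'_T;\xi)]\le G\,\delta_T.$$
Lemma~\ref{lem:gs} then reduces the theorem to showing $\delta_T=O\big(\frac{1}{N\kappa^T}\big)$.

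\textbf{Step 1 (perturbation of orthogonalization).} The key ingredient is a Lipschitz bound for the polar factor map $M\mapsto UV^\top$ on matrices whose singular values are separated by at least $\kappa$. By a Davis--Kahan/Wedin type argument, $U$ and $V$ move by at most $O(\|M-M'\|_\F/\kappa_t)$, so
$$\|U_tV_t^\top-U'_tV_t'^\top\|_\F\le \frac{c}{\kappa_t}\|M_t-M'_t\|_\F .$$
We take this inequality as the working estimate, with $\kappa$ also lower bounding the gaps of $M'_t$ (or simply adopt the same minimum over both runs).

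\textbf{Step 2 (recursion).} The update $W_t=(1-\eta\lambda)W_{t-1}-\eta U_tV_t^\top$ gives
$$\delta_t\le(1-\eta\lambda)\delta_{t-1}+\frac{c\eta}{\kappa}\Delta_t .$$
For the momentum we use a case split. With probability $1-1/N$ the sampled example is shared by both runs, and then
$$\|\nabla f(W_{t-1};\xi)-\nabla f(W'_{t-1};\xi)\|_\F\le L\,\delta_{t-1},$$
where we use smoothness as in Assumption~\ref{ass:s1}, applied per sample in the standard way. With probability $1/N$ the differing example is drawn, and the gradient difference is bounded by $2G$. Hence
$$\Delta_t\le(1-\beta)\Delta_{t-1}+\beta L\delta_{t-1}+\frac{2\beta G}{N}.$$

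\textbf{Step 3 (induction).} We combine the two recursions into one for $\delta_t+\Delta_t$. Since $\eta,\beta,\lambda=O(1)$ and $\kappa$ is small, each step multiplies the error by at most a factor $C/\kappa$ and adds an $O(1/N)$ term. By induction on $t$ (the mathematical induction mentioned in the abstract), $\delta_t+\Delta_t\le \frac{C'}{N}\sum_{s\le t}(C/\kappa)^s$, which is $O\big(\frac{1}{N\kappa^t}\big)$ with constants absorbed into $O(\cdot)$. Taking $t=T$ and inserting into Lemma~\ref{lem:gs} yields the claim.

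\textbf{Main obstacle.} Step 1 is the hard part. The polar factor is only locally Lipschitz, and the constant depends on the spectral gap, which degenerates as $\kappa\to0$. Justifying the $1/\kappa_t$ factor also requires controlling gaps uniformly along both coupled trajectories. I would first try the formula $\mathrm{d}(UV^\top)$ via the Sylvester-type equation with coefficients $1/(\sigma_i+\sigma_j)$ and $1/(\sigma_i-\sigma_j)$, integrated along the segment between $M_t$ and $M'_t$. Failing that, I would assume the gap condition holds along the segment.
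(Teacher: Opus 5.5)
Your proposal follows the paper's proof essentially step for step. Both use the same coupling of runs on $S$ and $S^{(i)}$, the singular-vector Davis--Kahan bound $\|U_tV_t^\top-U_t^{(i)}(V_t^{(i)})^\top\|_\F\le \frac{2\sqrt{2}}{\kappa_t}\|M_t-M_t^{(i)}\|_\F$ (the paper cites Corollary~3 of \cite{yu2015useful}), the $1-\frac{1}{N}$ versus $\frac{1}{N}$ case split for the momentum difference, the two coupled recursions solved by induction, and finally Lipschitzness plus Lemma~\ref{lem:gs}. The differences are cosmetic: you bound the differing-sample gradient gap by $2G$ instead of the paper's $2\sigma$ (cleaner, since it holds pointwise even though $W_t$ depends on $\xi_i$), and, exactly as the paper does, you absorb the $T$-fold growth constant $C^T$ into the $O(\cdot)$.
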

\begin{remark}
	From the above Theorem~\ref{th:1},
	the Muon algorithm has a generalization error bounded by $O(\frac{1}{N\kappa^{T}})$, where $\kappa$ denotes minimum difference between singular values of gradient estimator. Since the parameter $\kappa>0$ is generally small, it affects generalization of the Muon algorithm.
\end{remark}

\begin{theorem} \label{th:2}
	Assume the sequence $\{W_t,M_t\}_{t=0}^T$ is generated
	from Algorithm~\ref{alg:2} on dataset $S=\{\xi_1,\xi_2,\cdots,\xi_N\}$. Under the Assumptions~\ref{ass:s1},~\ref{ass:g},~\ref{ass:v}, without loss of generality, let $\tau\geq 1$, and let
	$\eta=O(1)$, $\beta=O(1)$ with $\beta\in (0,1]$ and $\lambda=0(1)$ with $\lambda <\frac{1}{\eta}$. When the iteration number is relatively small (i.e., $T=O(1)$) set
	$\eta=O(1)$, otherwise set $\eta=\frac{1}{T}$, then we have
	\begin{align}
		|\E [F(W_T) - F_S(W_T)] | \leq O\Big(\frac{1}{N}\Big). \nonumber
	\end{align}
\end{theorem}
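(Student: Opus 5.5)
We bound the generalization error of Algorithm~\ref{alg:2} through uniform stability and Lemma~\ref{lem:gs}. Run the algorithm on $S$ and on the neighboring set $S^{(i)}$ with shared randomness (the same sampled indices), producing $\{W_t,M_t\}$ and $\{W_t',M_t'\}$. Set $\delta_t=\E\|W_t-W_t'\|_\F$ and $\Delta_t=\E\|M_t-M_t'\|_\F$. By Assumption~\ref{ass:g}, for every $\xi$ we have $\E[f(W_T;\xi)-f(W_T';\xi)]\le G\delta_T$. Hence it suffices to prove $\delta_T=O(1/N)$ and then invoke Lemma~\ref{lem:gs}. We compare Algorithm~\ref{alg:2} on $S$ and $S^{(i)}$ with respect to the full index set; a corollary of this is that the MiMuon iterate under a single seed remains close to that of the perturbed run.

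\textbf{Step 1: momentum recursion.} Index $i$ is drawn with probability $1/N$ at each step. When it is not drawn, both runs use the same sample. Smoothness of $f(\cdot;\xi)$, which we take as Assumption~\ref{ass:s1} applied per-sample (in the spirit of \cite{hardt2016train}), gives $\|\nabla f(W_{t-1};\xi)-\nabla f(W_{t-1}';\xi)\|_\F\le L\|W_{t-1}-W_{t-1}'\|_\F$. When index $i$ is drawn, Assumption~\ref{ass:g} bounds the gradient difference by $2G$. Together these yield
\begin{align}
\Delta_t\le(1-\beta)\Delta_{t-1}+\beta L\delta_{t-1}+\frac{2\beta G}{N}. \nonumber
\end{align}

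\textbf{Step 2: the parameter update in each branch.} The main obstacle is that the two runs may take different branches at step $t$, or both may take the orthogonal branch. In the orthogonal branch both $M_t$ and $M_t'$ have gap at least $\tau$, so perturbation theory for the polar factor (a Wedin/Davis--Kahan type bound) gives $\|U_tV_t^\top-U_t'V_t'^\top\|_\F\le \frac{c}{\tau}\|M_t-M_t'\|_\F$. Since $\tau\ge1$, the map $\mathcal{G}(\cdot,0)$ is $c$-Lipschitz on this region. In the SGDM branch, $\mathcal{G}(\cdot,1)$ is $1$-Lipschitz. In the mixed case we bound crudely: $\|U_tV_t^\top-M_t'\|_\F\le\sqrt r+\|M_t'\|_\F$, and $M_t'$ has norm at most $G$ because it is an average of gradients with norm at most $G$. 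This contributes an additive constant. If needed, we would exploit that a branch mismatch can only occur near the threshold, where the gap is close to $\tau$, so the mismatch term is again controlled by $\|M_t-M_t'\|_\F$ up to constants. Combining the cases gives
\begin{align}
\delta_t\le(1-\eta\lambda)\delta_{t-1}+\eta c\,\Delta_t, \nonumber
\end{align}
with $c$ independent of $\kappa$. This is precisely what separates the result from Theorem~\ref{th:1}, where the Lipschitz constant of the polar map is of order $1/\kappa_t$ and compounds into $\kappa^{-T}$.

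\textbf{Step 3: induction.} We stack the two recursions as $(\delta_t,\Delta_t)\le A(\delta_{t-1},\Delta_{t-1})+(0,2\beta G/N)$, where the norm of $A$ is at most $1+O(\eta)$. Starting from $\delta_0=\Delta_0=0$ and inducting on $t$ gives $\delta_T\le \frac{C}{N}\sum_{t<T}(1+c'\eta)^t$. When $T=O(1)$ and $\eta=O(1)$, this sum is $O(1)$. When $\eta=1/T$, the factor satisfies $(1+c'/T)^T\le e^{c'}$, but the sum itself still grows like $T$. To control this we would keep the weight $\eta$ on the $1/N$ source term rather than $\beta$: since $\delta$ only accumulates through $\eta\Delta$, we get $\delta_T\lesssim \eta\sum_t \Delta_t\lesssim \eta T e^{c'}\cdot\frac{G}{N}=O(1/N)$. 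Applying Lemma~\ref{lem:gs} then yields $|\E[F(W_T)-F_S(W_T)]|\le O(1/N)$.
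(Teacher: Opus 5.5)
Your route is the paper's: couple the runs on $S$ and $S^{(i)}$ through the same index sequence, propagate $\Delta_t$ and $\delta_t$ through the momentum and update recursions, use $\tau\ge1$ to make the orthogonal-branch constant $\kappa$-free, and take $\eta=1/T$ for large $T$. Several of your steps are tighter than the paper's:
\begin{itemize}
\item You bound the replaced-sample term by $2G$; the paper uses $\sigma$, which is only a second-moment bound.
\item You state explicitly that per-sample smoothness is needed.
\item You run the recursion with $\eta=1/T$ throughout, so the $1/N$ source reaches $\delta_t$ with weight $\eta$. The paper instead asserts $\psi_t,\phi_t=O(t/N)$ under $\eta=O(1)$ and only then substitutes $\eta=1/T$. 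That assertion fails for $\lambda=0$, where the coupled recursion grows geometrically.
\end{itemize}

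The proof nevertheless breaks in Step 2, in the mixed-branch case. The inequality $\delta_t\le(1-\eta\lambda)\delta_{t-1}+\eta c\,\Delta_t$ is asserted, not derived. Your crude bound leaves an extra term $\eta(\sqrt r+G)\Pr[\text{the two runs take different branches at step } t]$. Your fallback, that a mismatch is controlled by $\|M_t-M_t'\|_\F$, is false because $\mathcal{G}$ is discontinuous at the threshold. If $M_t$ has gap exactly $\tau$ and $M_t'\to M_t$ from below the threshold, then $\|U_tV_t^\top-M_t'\|_\F\to\|U_t(I-\Sigma_t)V_t^\top\|_\F$. This limit does not vanish: for $r\ge2$ it is at least $1/2$, since gaps $\ge1$ allow at most one singular value within $1/2$ of $1$.

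Nor is the mismatch probability $O(1/N)$. Once index $i$ has been drawn (probability about $t/N$ by step $t$), $M_t-M_t'$ is of order $\beta G$ rather than $1/N$. Nothing in the assumptions then prevents the two runs from landing on opposite sides of the threshold with constant probability. The resulting contribution is only bounded by order $\eta\sum_{t\le T}t/N=O(\eta T^2/N)$, i.e.\ $O(T/N)$ at $\eta=1/T$.

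The paper does not close this either: its proof tacitly assumes that $M_t$ and $M_t^{(i)}$ always pass or fail the test together. A complete argument therefore needs an extra hypothesis of that kind, or a continuous switching rule in place of the hard threshold.

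A second problem, shared with the paper's Davis--Kahan step, sits inside the orthogonal branch. The test in Algorithm~\ref{alg:2} only separates the nonzero singular values from one another, whereas the sensitivity of the polar factor $UV^\top$ is governed by the smallest singular value. For instance, take $0<\epsilon\le1$, $M=\mathrm{diag}(\epsilon,2)$ and $M'=\mathrm{diag}(-\epsilon,2)$. Both pass the test with $\tau=1$ and satisfy $\|M-M'\|_\F=2\epsilon$, yet their polar factors $I$ and $\mathrm{diag}(-1,1)$ are at Frobenius distance $2$. Davis--Kahan controls singular subspaces only up to sign or rotation, not the sign-coupled product $U_tV_t^\top$. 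So the claim that $\mathcal{G}(\cdot,0)$ is $c$-Lipschitz on this region is false as stated. You would need $\sigma_{\min}(M_t),\sigma_{\min}(M_t')\ge c_0\tau$ for some constant $c_0>0$; a polar-factor perturbation bound with constant of order $2/(\sigma_{\min}(M_t)+\sigma_{\min}(M_t'))$ then gives the $\kappa$-free Lipschitz constant you want.
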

\begin{remark}
	From the above Theorem~\ref{th:2},
	our MiMuon algorithm has a generalization error bounded by $O(\frac{1}{N})$, which reaches
	the same generalization error of $O\big(\frac{1}{N}\big)$ as
	that of both SGD~\citep{hardt2016train}
	and SGDM~\citep{ramezani2024generalization}. Thus,
	our MiMuon incorporates the advantages of both Muon and SGDM.
\end{remark}
\section{Convergence Analysis}
In the section, we provide convergence analysis for
our MiMuon algorithms under some mild conditions.
\begin{assumption} \label{ass:b}
	(\textbf{Feasibility})
	The function $F(W)$ has a lower bounded, i.e.,  $F^* = \inf_{W\in
		\mathbb{R}^{m\times n}} F(W)>-\infty$.
\end{assumption}

\begin{lemma} \label{lem:mb}
	Assume the sequence $\{W_t,M_t\}_{t=0}^T$ is generated from Algorithm~\ref{alg:2}.
	Under the Assumptions~\ref{ass:s1},~\ref{ass:v}, given
	$\|W_0\|_\F \leq \eta\hat{G} $ and $\lambda \leq \frac{1}{2(1+T)\eta\hat{G}}$, we have
	\begin{align}
		& \|W_{t}-W_{t-1}\|_\F \leq \eta\breve{G}, \quad \|W_{t}\|_\F \leq (t+1)\eta\hat{G},  \ \forall t\geq 1\nonumber \\
		& \frac{1}{T+1}\sum_{t=0}^{T} \mathbb{E} \|\nabla F(W_t) - M_{t+1}\|_\F  \leq \frac{\sigma}{\sqrt{T\beta}} +  \frac{\sqrt{2}}{\beta}L\eta\breve{G}  + \sqrt{\beta}\sigma,
	\end{align}
	where $\hat{G}=\max(G,\sqrt{r})$ and $\breve{G}=\hat{G}+\frac{1}{2}$.
\end{lemma}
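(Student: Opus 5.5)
The plan is to prove the three claims of Lemma~\ref{lem:mb} in order: first the per-step displacement bound, then the norm bound on $W_t$ by induction on $t$, and finally the momentum tracking error bound via a standard recursion for $E_t:=M_{t+1}-\nabla F(W_t)$. Throughout we use the unified update~(\ref{eq:7}), $W_t=(1-\eta\lambda)W_{t-1}-\eta\mathcal{G}(M_t,\alpha)$ with $\alpha\in\{0,1\}$, so that $W_t-W_{t-1}=-\eta(\mathcal{G}(M_t,\alpha)+\lambda W_{t-1})$.

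\textbf{Step 1: bounding $\|\mathcal{G}(M_t,\alpha)\|_\F\le\hat{G}$.} When $\alpha=0$, $\|U_tV_t^\top\|_\F=\sqrt{\mathrm{rank}(M_t)}\le\sqrt{r}\le\hat{G}$. When $\alpha=1$, we need $\|M_t\|_\F\le G$. Here I would use that $M_t$ is a convex combination of past stochastic gradients together with $M_0=0$. Assumption~\ref{ass:g} (Lipschitzness of $f$) gives $\|\nabla f(W;\xi)\|_\F\le G$, and induction then gives $\|M_t\|_\F\le\beta G+(1-\beta)G=G$. This uses Assumption~\ref{ass:g} even though the lemma cites only Assumptions~\ref{ass:s1} and~\ref{ass:v}, so I would state it implicitly.

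\textbf{Step 2: induction on $\|W_t\|_\F$.} Assume $\|W_{t-1}\|_\F\le t\eta\hat{G}$; the base case is $\|W_0\|_\F\le\eta\hat{G}$. Then
\[
\|W_t\|_\F\le(1-\eta\lambda)\|W_{t-1}\|_\F+\eta\hat{G}\le t\eta\hat{G}+\eta\hat{G}=(t+1)\eta\hat{G},
\]
where we use $0\le\eta\lambda<1$. For the displacement, $\|W_t-W_{t-1}\|_\F\le\eta\hat{G}+\eta\lambda\|W_{t-1}\|_\F\le\eta\hat{G}+\eta\lambda t\eta\hat{G}$. The condition $\lambda\le\frac{1}{2(1+T)\eta\hat{G}}$ makes $\lambda t\eta\hat{G}\le\frac12$, which yields $\eta\breve{G}$.

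\textbf{Step 3: momentum error recursion.} From the $M$ update,
\[
E_t=(1-\beta)E_{t-1}+(1-\beta)\big(\nabla F(W_{t-1})-\nabla F(W_t)\big)+\beta\big(\nabla f(W_t;\xi_{t+1})-\nabla F(W_t)\big).
\]
Unrolling this gives three contributions:
\begin{itemize}
\item the initial term $(1-\beta)^{t}E_0$, with $E_0=\beta(\nabla f(W_0;\xi_1)-\nabla F(W_0))$;
\item a drift sum $\sum_k(1-\beta)^{t-k}\|\nabla F(W_{k-1})-\nabla F(W_k)\|_\F\le\frac{1}{\beta}L\eta\breve{G}$, using Assumption~\ref{ass:s1} and Step 2;
\item a noise martingale sum $\beta\sum_k(1-\beta)^{t-k}\zeta_k$.
\end{itemize}
The noise terms are conditionally mean zero, so taking $\E\|\cdot\|_\F$, using Jensen, and using orthogonality of the martingale increments under Assumption~\ref{ass:v} bounds the noise part by $\beta\sigma\big(\sum_k(1-\beta)^{2(t-k)}\big)^{1/2}\le\sqrt{\beta}\sigma$. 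Averaging the initial-term contribution over $t=0,\dots,T$ gives at most $\frac{1}{T+1}\sum_t(1-\beta)^t\sigma\le\frac{\sigma}{\beta T}$, which is also $\le\frac{\sigma}{\sqrt{T\beta}}$ in the regime $\beta T\ge1$. If needed, I would instead bound the squared error first to get the $\sqrt{\cdot}$ form, and absorb the $\sqrt2$ factor from splitting drift and noise by $(a+b)^2\le2a^2+2b^2$.

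The main obstacle I expect is Step 3's bookkeeping. The constants must come out exactly as stated ($\sqrt2/\beta$ rather than $1/\beta$, and $\sigma/\sqrt{T\beta}$), and the martingale orthogonality must be justified carefully because $W_t$ depends on earlier samples. My first attempt would be the squared-norm recursion
\[
\E\|E_t\|^2\le(1-\beta)\E\|E_{t-1}\|^2+\frac{2}{\beta}L^2\eta^2\breve{G}^2+\beta^2\sigma^2,
\]
followed by summing, then Jensen/Cauchy–Schwarz to pass to the averaged first moment. This is the route that naturally produces the $\sqrt{2}$ and the square-root terms.
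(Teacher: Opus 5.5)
\textbf{Gap: the initial tracking error.} Your plan follows the paper's route: the same squared recursion for $E_t$, unrolled and averaged, then Jensen and $\sqrt{a+b+c}\le\sqrt a+\sqrt b+\sqrt c$. It also inherits the paper's error at the initial term. Since $M_0=0$,
\[
E_0=M_1-\nabla F(W_0)=\beta\big(\nabla f(W_0;\xi_1)-\nabla F(W_0)\big)-(1-\beta)\nabla F(W_0),
\]
not $\beta(\nabla f(W_0;\xi_1)-\nabla F(W_0))$. The deterministic bias $(1-\beta)\nabla F(W_0)$ is not controlled by $\sigma$. So for $\beta<1$, neither your first-moment route nor the squared route can produce the term $\frac{\sigma}{\sqrt{T\beta}}$.

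In fact the third inequality is false as stated. Take the deterministic loss $f(W;\xi)=\sqrt{1+\|W-A\|_\F^2}$ with $A\neq 0$. It is nonnegative, $1$-Lipschitz and $1$-smooth, with $\sigma=0$. Set $\beta=\frac12$, $\lambda=0$, $W_0=0$. The left-hand side is at least $\frac{1}{T+1}\|E_0\|_\F=\frac{\|\nabla F(0)\|_\F}{2(T+1)}>0$. The right-hand side equals $2\sqrt{2}L\eta\breve{G}$, which tends to $0$ as $\eta\to 0$. The paper's proof makes exactly the same slip when it bounds $\|\Theta_0\|_\F^2$ by $\sigma^2$, so following it will not close the gap.

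\textbf{Repair.} Carry the bias through. Because $W_0$ is deterministic, the cross term vanishes and $\E\|E_0\|_\F^2\le(1-\beta)^2\|\nabla F(W_0)\|_\F^2+\beta^2\sigma^2$. After averaging and taking square roots, the first term becomes $\frac{(1-\beta)\|\nabla F(W_0)\|_\F+\beta\sigma}{\sqrt{T\beta}}\le\frac{G+\sigma}{\sqrt{T\beta}}$, using $\|\nabla F\|_\F\le G$ from Assumption~\ref{ass:g}. With $\beta=T^{-1/2}$ this is still $O(T^{-1/4})$, so the rate in Theorem~\ref{th:3} survives.

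\textbf{The rest of the proposal.} The rest of your proposal is sound, and your Step 2 is more careful than the paper's. The paper unrolls each branch as if every iteration took that branch, and it invokes $\eta\hat{G}\le\eta\sqrt{r}$ and $\eta\hat{G}\le\eta G$, which go the wrong way. Your one-step induction with the unified bound $\|\mathcal{G}(M_t,\alpha)\|_\F\le\hat{G}$ handles mixed branches correctly. You are also right that $\|M_t\|_\F\le G$ requires Assumption~\ref{ass:g}, which the lemma does not list. Finally, $\lambda t\eta\hat{G}\le\frac12$ only holds for $t\le T+1$, which is all the recursion needs.
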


\begin{theorem} \label{th:3}
	Assume the sequence $\{W_t\}_{t=0}^T$ is generated from Algorithm~\ref{alg:2}. Under the Assumptions~\ref{ass:s1},~\ref{ass:g}~\ref{ass:v}~\ref{ass:b},
	and let $\|W_0\|_\F \leq \eta\hat{G} $ and $\lambda \leq \frac{1}{2(1+T)\eta\hat{G}}$, we have
	\begin{align}
		\frac{1}{T+1} \sum_{t=0}^{T} \E \|\nabla F(W_t)\|_\F & \leq \frac{2(F(W_0)-F^*)}{T\eta}  + 4\sqrt{r}\Big( \frac{\sigma}{\sqrt{T\beta}} +  \frac{\sqrt{2}}{\beta}L\eta\breve{G}  + \sqrt{\beta}\sigma \Big) + L\eta\breve{G}^2.
	\end{align}
\end{theorem}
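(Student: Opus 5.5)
The proof is a standard descent-lemma argument for normalized/orthogonalized momentum methods. It treats the two branches of Algorithm~\ref{alg:2} through the unified update~(\ref{eq:7}) and bounds the momentum error with Lemma~\ref{lem:mb}.

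\textbf{Step 1: one-step descent.} By Assumption~\ref{ass:s1},
\begin{align}
F(W_t) \leq F(W_{t-1}) + \langle \nabla F(W_{t-1}), W_t-W_{t-1}\rangle + \frac{L}{2}\|W_t-W_{t-1}\|_\F^2. \nonumber
\end{align}
Lemma~\ref{lem:mb} gives $\|W_t-W_{t-1}\|_\F\leq \eta\breve{G}$, so the quadratic term is at most $\frac{L}{2}\eta^2\breve{G}^2$. Next substitute $W_t-W_{t-1}=-\eta(\mathcal{G}(M_t,\alpha_t)+\lambda W_{t-1})$ into the inner product. The weight-decay part is controlled by $\|W_{t-1}\|_\F\leq t\eta\hat{G}$ together with $\lambda\leq \frac{1}{2(1+T)\eta\hat{G}}$. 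This gives $\eta\lambda\|W_{t-1}\|_\F\leq \eta/2$, which leaves a term of order $\eta\|\nabla F(W_{t-1})\|_\F/2$ or smaller. It is absorbed into the final constants and is the source of the factor $2$ in front of $(F(W_0)-F^*)/(T\eta)$.

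\textbf{Step 2: the main inner product, by branch.} In the Muon branch ($\alpha_t=0$), write $\nabla F(W_{t-1}) = M_t + (\nabla F(W_{t-1})-M_t)$. Then $\langle M_t,U_tV_t^\top\rangle=\|M_t\|_*$ and $|\langle E,U_tV_t^\top\rangle|\leq \|E\|_*\leq \sqrt{r}\|E\|_\F$. Using $\|M_t\|_*\geq \|\nabla F(W_{t-1})\|_* - \sqrt r\|E_t\|_\F\geq \|\nabla F(W_{t-1})\|_\F-\sqrt r\|E_t\|_\F$, where $E_t=\nabla F(W_{t-1})-M_t$, we obtain
\begin{align}
-\langle \nabla F(W_{t-1}), U_tV_t^\top\rangle \leq -\|\nabla F(W_{t-1})\|_\F + 2\sqrt{r}\|E_t\|_\F. \nonumber
\end{align}
In the SGDM branch ($\alpha_t=1$), the condition $\kappa_t<\tau$ holds. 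The estimate needed there is again $-\langle \nabla F, M_t\rangle \leq -c\|\nabla F\|_\F + C\sqrt r\|E_t\|_\F$, plus a term of order $L\eta\|M_t\|_\F^2$. For this I use $\langle \nabla F,M_t\rangle = \|\nabla F\|^2_\F-\langle\nabla F,E_t\rangle$ together with the boundedness of $\|M_t\|_\F$ implied by the step bound of Lemma~\ref{lem:mb}. \textbf{This branch is the main obstacle.} Plain gradient steps give a squared-norm descent rather than a linear-norm one, so converting to $\|\nabla F\|_\F$ needs care. I would first try to split on whether $\|\nabla F(W_{t-1})\|_\F\geq 1$. If it is, then $\|\nabla F\|_\F^2\geq\|\nabla F\|_\F$. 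If not, the term $\|\nabla F\|_\F\le 1$ is absorbed into the momentum-error term via $\|M_t\|_\F$ bounded, using the slack in the constant $4\sqrt r$ (the assumption $\tau\ge1$-type normalization helps here).

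\textbf{Step 3: telescope and apply Lemma~\ref{lem:mb}.} Summing over $t$, taking expectations, and using $F(W_T)\geq F^*$ (Assumption~\ref{ass:b}) gives
\begin{align}
\frac{1}{T+1}\sum_{t=0}^T\E\|\nabla F(W_t)\|_\F \leq \frac{2(F(W_0)-F^*)}{T\eta} + \frac{4\sqrt r}{T+1}\sum_{t=0}^T\E\|\nabla F(W_t)-M_{t+1}\|_\F + L\eta\breve G^2, \nonumber
\end{align}
after dividing by $\eta/2$. Finally, substituting the momentum-error bound of Lemma~\ref{lem:mb} yields the stated result. Choosing $\beta\asymp T^{-1/2}$ and $\eta\asymp T^{-3/4}$ would then recover the $O(T^{-1/4})$ rate.
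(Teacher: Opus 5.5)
Your Step 1, Step 3 and the Muon branch of Step 2 match the paper's argument and are fine: $\langle M_t,U_tV_t^\top\rangle=\|M_t\|_*$, $|\langle E_t,U_tV_t^\top\rangle|\le\sqrt r\|E_t\|_\F$, and $\eta\lambda\|W_{t-1}\|_\F\le\eta/2$. The gap is the SGDM branch, and the repair you sketch cannot work. To reach the stated bound you need, in that branch, a per-step estimate of the form $\langle\nabla F(W_{t-1}),M_t\rangle\ge\|\nabla F(W_{t-1})\|_\F-C\sqrt r\|E_t\|_\F$, up to the $L\eta\breve G^2$ slack. Take $\beta=1$ and noiseless gradients. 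Then $E_t=0$ and $M_t=\nabla F(W_{t-1})$, so the estimate becomes $\|\nabla F(W_{t-1})\|_\F^2\ge\|\nabla F(W_{t-1})\|_\F$, which is false whenever $0<\|\nabla F(W_{t-1})\|_\F<1$. Nothing can absorb this deficit. The momentum-error term is zero, and boundedness of $\|M_t\|_\F$ is an upper bound, which is the wrong direction. Also, $\tau\ge1$ is not a hypothesis of this theorem. In any case the branch test concerns singular-value gaps of $M_t$, so a matrix with a repeated singular value goes to the SGDM branch whatever its norm.

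The paper does not get around this either. In its SGDM branch it uses $-\eta\langle\nabla F(W_{t-1}),M_t\rangle=-\frac{\eta}{2}\|\nabla F(W_{t-1})\|_\F-\frac{\eta}{2}\|M_t\|_\F+\frac{\eta}{2}\|\nabla F(W_{t-1})-M_t\|_\F$. This is the polarization identity with the squares dropped, and it is false: again take $M_t=\nabla F(W_{t-1})$ of norm below $1$. In fact the theorem as stated fails. Let $m=n=2$, $A=I/\sqrt2$, and $f(W;\xi)=F(W)=g\,\psi(\langle A,W\rangle)$. Here $\psi\ge0$ is smooth, $\psi(s)=s+D+1$ for $s\ge-D$, $0\le\psi'\le1$, $|\psi''|\le2$, and $\psi$ is constant for $s\le-D-1$. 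Then $G=g$, $L\le 2g$ and $\sigma=0$. Run Algorithm~\ref{alg:2} with $W_0=0$, $\lambda=0$, $\beta=1$. Every $M_t$ is a nonzero multiple of $I$, so its singular-value gap is $0<\tau$, the SGDM branch is always taken, and $W_t=-t\eta gA$. Choosing $D=T\eta g$ gives $\|\nabla F(W_t)\|_\F=g$ for all $t\le T$, so the left side equals $g$. Meanwhile $F(W_0)-F^*\le g(D+1)$ makes the right side at most $2g^2+\frac{2g}{T\eta}+O(g\eta)$. For $g=1/10$, $T\eta$ large and $\eta$ small, this is about $0.02<0.1$. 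The same construction also defeats the schedule $\beta=T^{-1/2}$, $\eta=T^{-3/4}$ of the remark. Plain momentum-gradient steps only certify descent in $\|\nabla F\|_\F^2$, via the genuine identity with squared norms. So a correct result must change the statement, for example by using a different stationarity measure or adding a non-vanishing term, not just the proof.
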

\begin{remark}
	Given $\eta=O(\frac{1}{T^{\frac{3}{4}}})$ and $\beta=\frac{1}{\sqrt{T}}$, we can obtain
	\begin{align}
		\frac{1}{T+1} \sum_{t=0}^T &\E \|\nabla F(W_t)\|_\F
		\leq \frac{2(F(W_0)-F^*)}{T\eta}  + 4\sqrt{r}\Big( \frac{\sigma}{\sqrt{T\beta}} +  \frac{\sqrt{2}}{\beta}L\eta\breve{G}  + \sqrt{\beta}\sigma \Big) + L\eta\breve{G}^2 \nonumber \\
		& = O(\frac{1}{T^{\frac{1}{4}}}) + O(\frac{1}{T^{\frac{1}{4}}}) + O(\frac{1}{T^{\frac{1}{4}}})+ O(\frac{1}{T^{\frac{1}{4}}}) + O(\frac{1}{T^{\frac{3}{4}}}) = O(\frac{1}{T^{\frac{1}{4}}}).
	\end{align}
	Thus, our MiMuon optimizer has the same convergence rate of $O(\frac{1}{T^{1/4}})$ as the Muon optimizer~\cite{li2025note,chang2025convergence}.
\end{remark}

\begin{figure*}[ht]
	\centering
	\subfigure[Training loss.]{\includegraphics[width=0.43\textwidth]{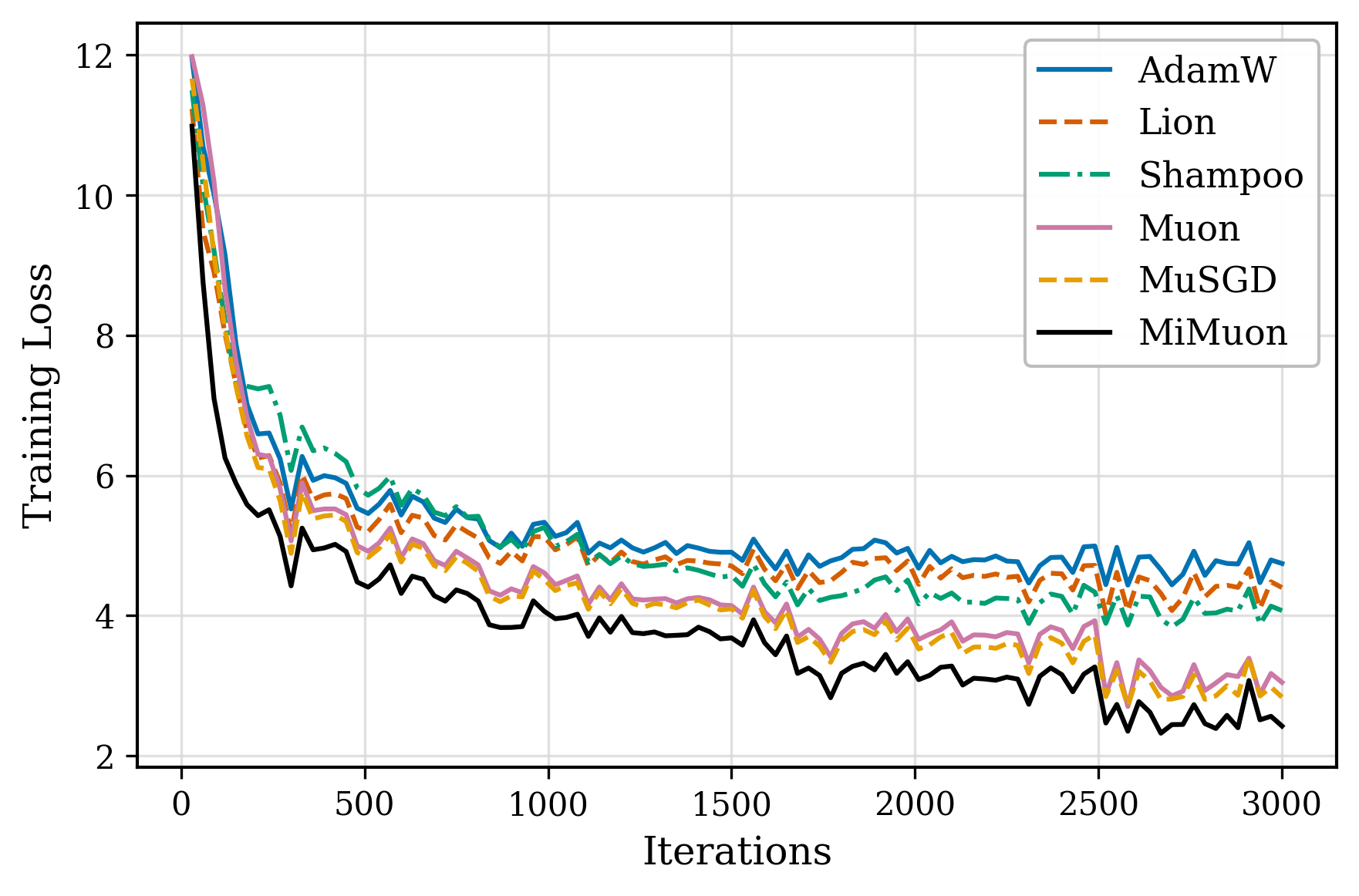}}
	\hfill
	\subfigure[Validation loss.]{\includegraphics[width=0.43\textwidth]{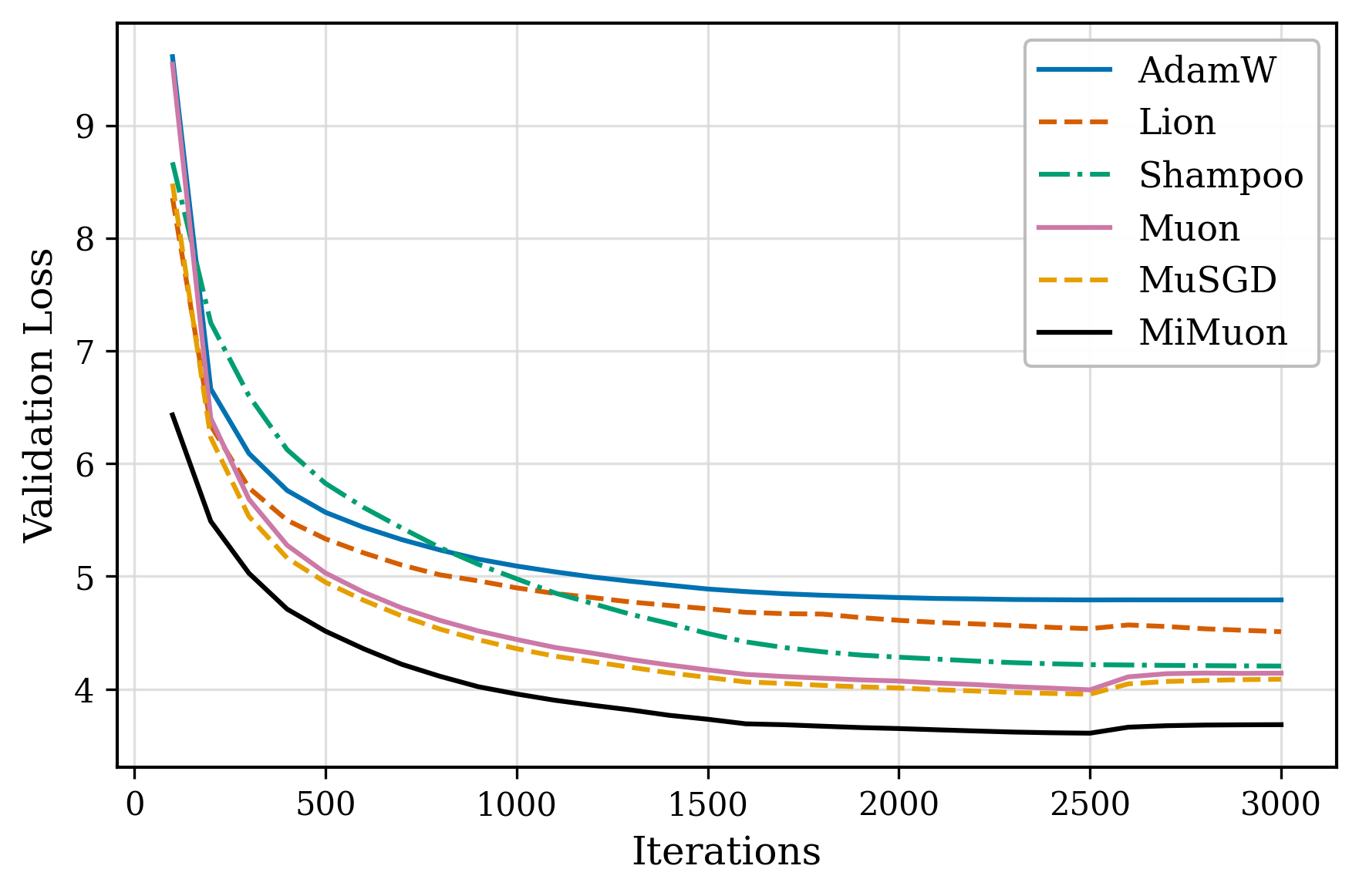}}
	\hfill
	\caption{Optimization behavior on \textbf{Qwen3-0.6B}. }
	\label{fig:qwen_curves}
\end{figure*}


\begin{table}[t]
	\centering
	\caption{ Results on \textbf{Qwen3-0.6B}. The best result is shown in bold and the second-best result is underlined.}
	\label{tab:qwen_results}
	\resizebox{0.62\textwidth}{!}{
		\begin{tabular}{lccc}
			\toprule
			Optimizer & Train loss & Final val. loss & Best val. loss \\
			\midrule
			AdamW   & 4.742 & 4.790 & 4.790 \\
			Lion    & 4.399 & 4.508 & 4.508 \\
			Shampoo & 4.070 & 4.204 & 4.204 \\
			Muon    & 3.046 & 4.141 & 3.993 \\
			MuSGD   & \underline{2.834} & \underline{4.087} & \underline{3.954} \\
			MiMuon  & \textbf{2.422} & \textbf{3.684} & \textbf{3.608} \\
			\bottomrule
		\end{tabular}
	}
\end{table}

\section{Numerical Experiments}
\label{sec:experiments}
In the section, 
we evaluate our MiMuon algorithm on two training workloads: language modeling and object detection. Here, Qwen3-0.6B~\citep{qwen3technicalreport} represents dense Transformer language modeling, where most of the computation is concentrated in large projection matrices. YOLO26m~\citep{sapkota2025yolo26}, by contrast, is a one-stage detector whose objective mixes classification, localization, and distributional box regression. Success on both tasks would indicate that our MiMuon is not merely tuned to a single model family.

In the experiments, we include AdamW~\citep{loshchilov2017decoupled}, Lion~\citep{chen2023symbolic}, Shampoo~\citep{gupta2018shampoo}, Muon~\citep{jordanmuon}, and MuSGD~\citep{sapkota2025yolo26} as reference optimizers. They are used to set the scale of the comparison rather than as separate objects of study. Within each task, all methods use the same model, data split, and training budget. The main hyperparameters and tuning ranges are listed in Appendix~\ref{app:hyperparams}. All experiments were conducted on the NVIDIA A100-SXM4-80GB.

\begin{figure*}[ht]
	\centering
	\subfigure[Box loss.]{\includegraphics[width=0.32\textwidth]{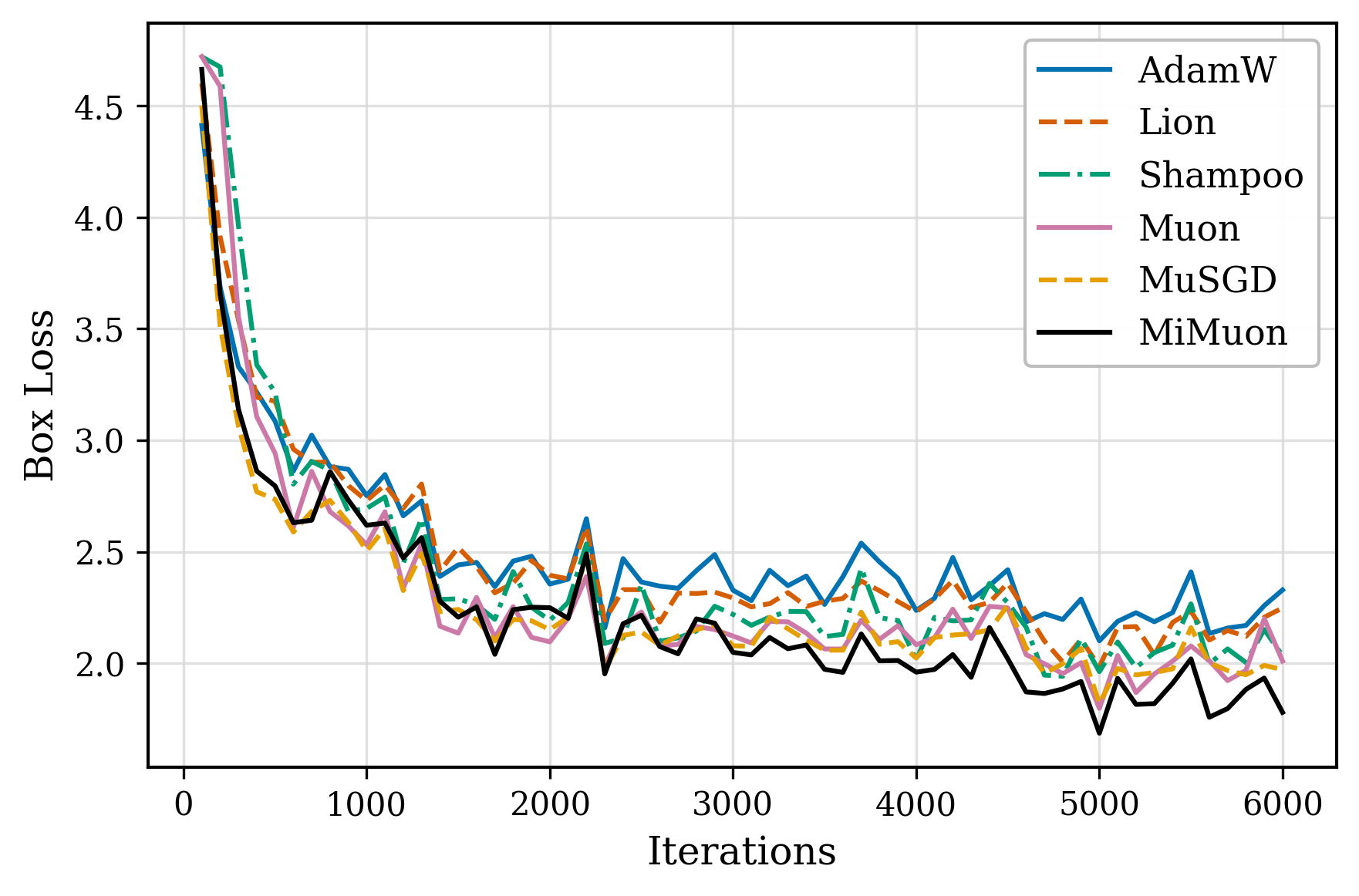}}
	\hfill
	\subfigure[Classification loss.]{\includegraphics[width=0.32\textwidth]{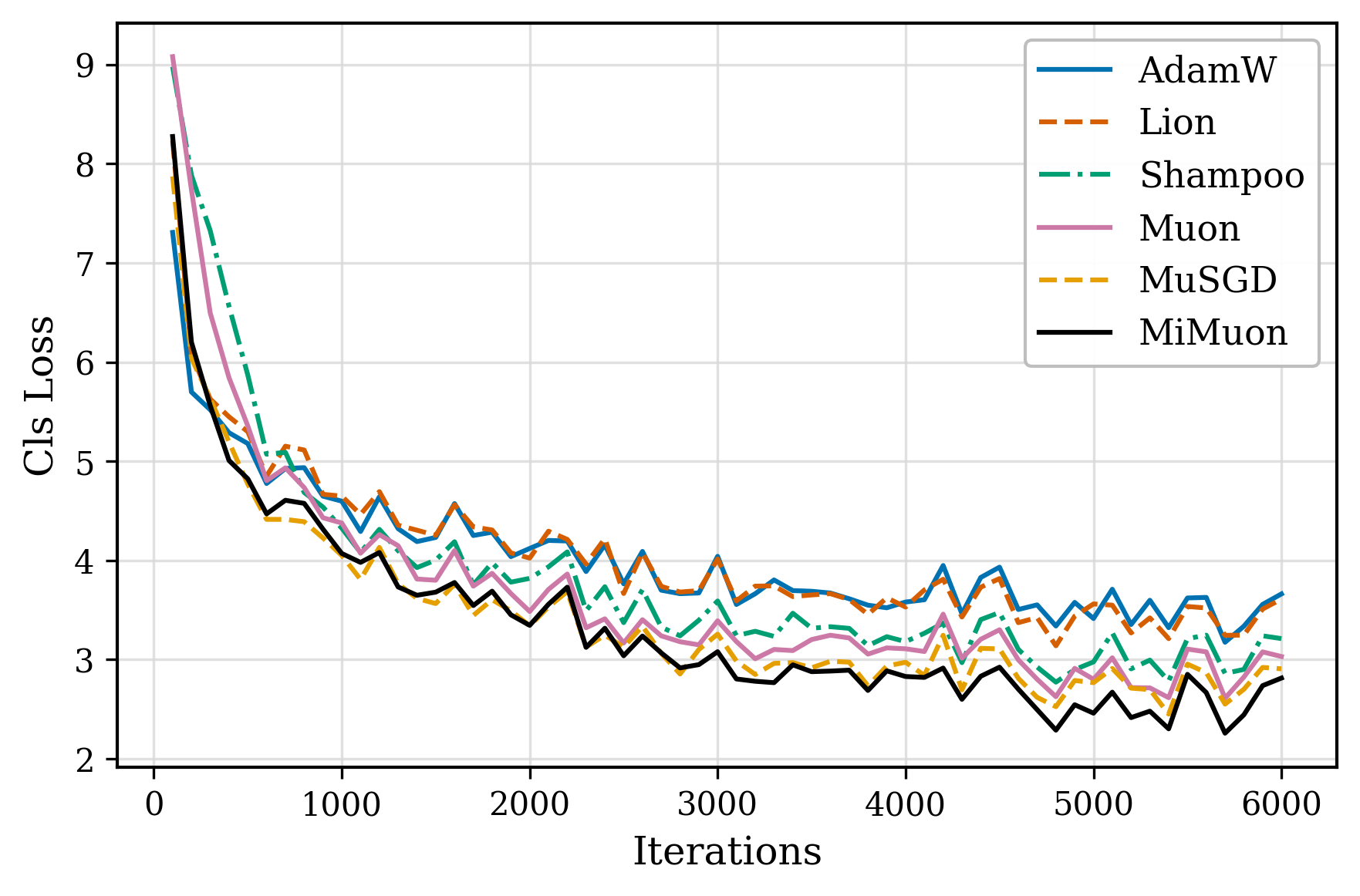}}
	\hfill
	\subfigure[DFL loss.]{\includegraphics[width=0.32\textwidth]{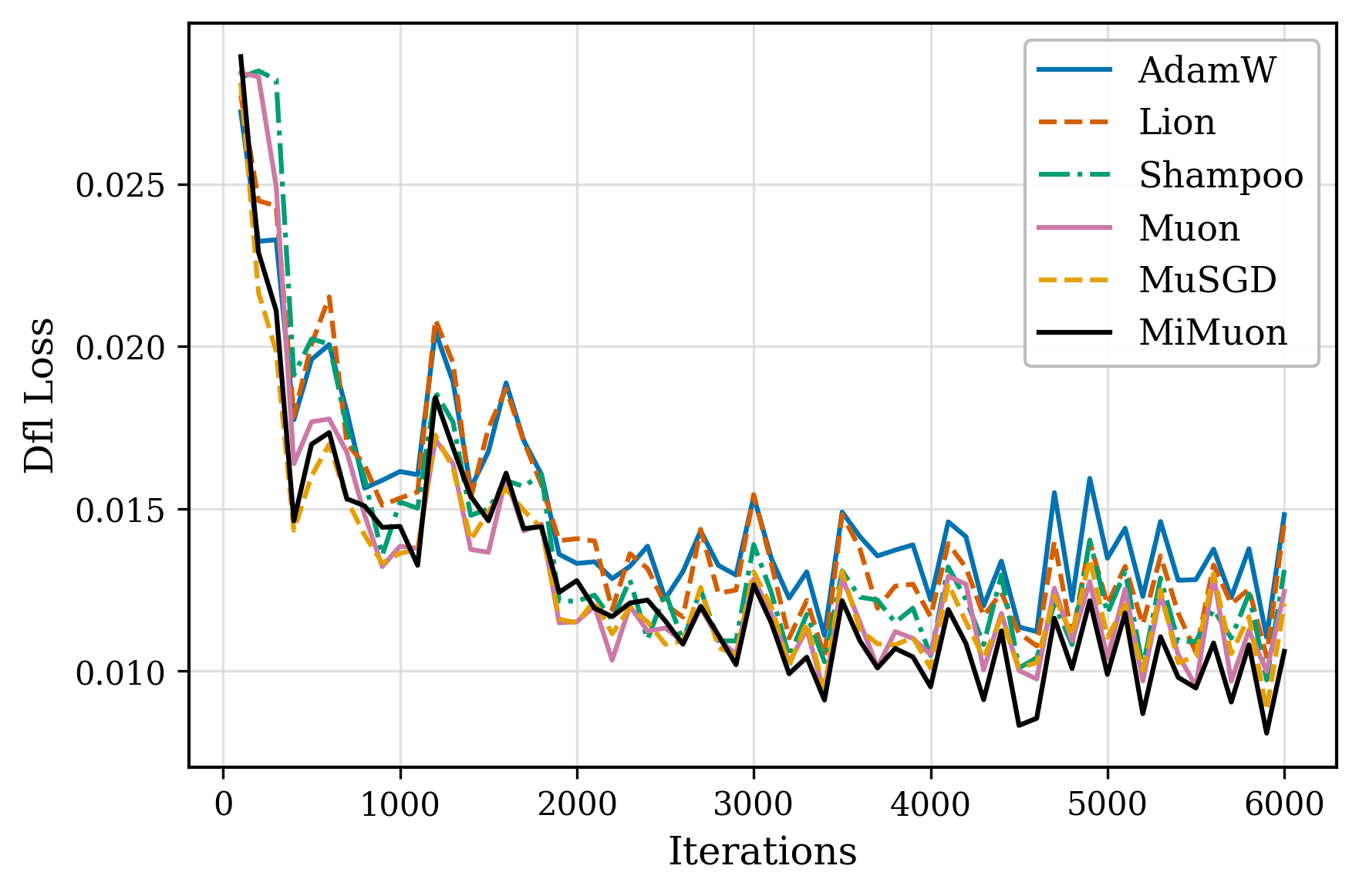}}
	\hfill
	\caption{Training loss components on \textbf{YOLO26m}. }
	\label{fig:yolo_train_losses}
\end{figure*}

\begin{figure*}[ht]
	\centering
	\subfigure[mAP$_{50}$.]{\includegraphics[width=0.43\textwidth]{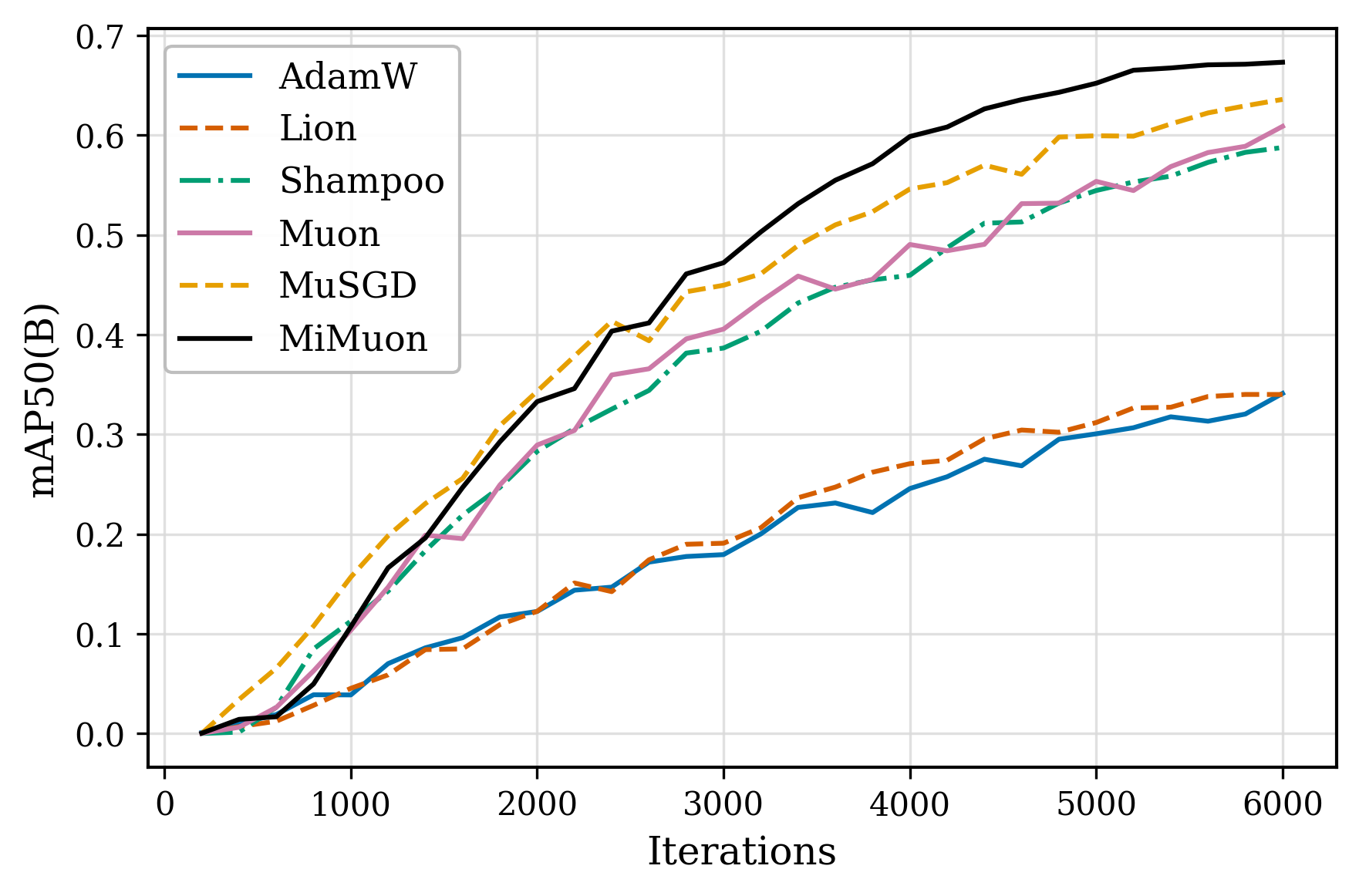}}
	\hfill
	\subfigure[mAP$_{50:95}$.]{\includegraphics[width=0.43\textwidth]{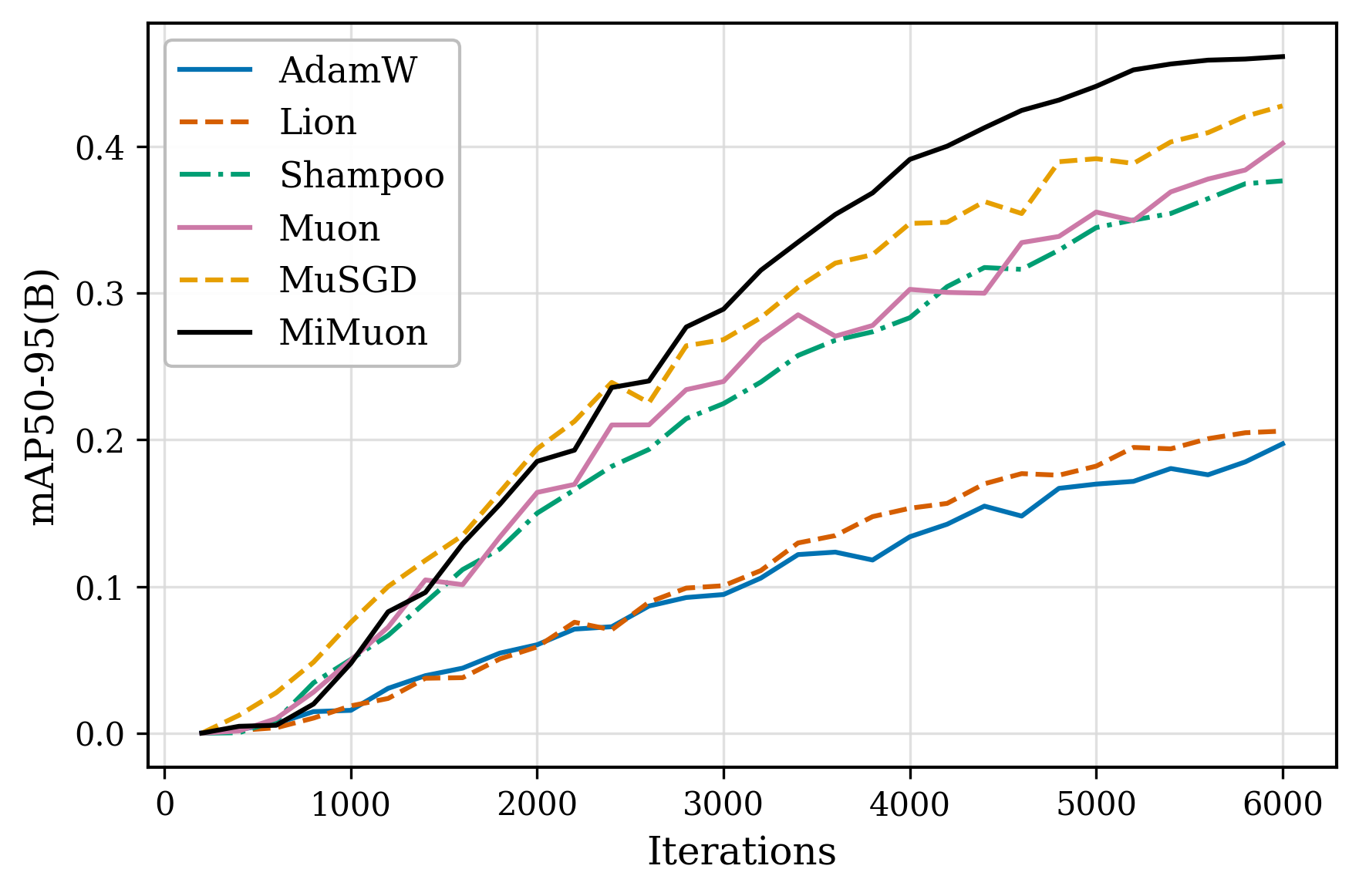}}
	\hfill
	\caption{Validation accuracy (mAP50 and mAP50-95) on \textbf{YOLO26m}. }
	\label{fig:yolo_map}
\end{figure*}

\begin{table*}[t]
	\centering
	\caption{Results on \textbf{YOLO26m}. The best result is shown in bold and the second-best result is underlined. }
	\label{tab:yolo_results}
	\setlength{\tabcolsep}{4.2pt}
	\renewcommand{\arraystretch}{1.12}
	\begin{tabular}{lccccccc}
		\toprule
		Optimizer & Train box & Train cls & Train DFL & Val. box & Val. cls & Val. DFL & Fitness \\
		\midrule
		AdamW   & 2.329 & 3.662 & 0.0148 & 1.859 & 2.501 & 0.0181 & 0.197 \\
		Lion    & 2.248 & 3.611 & 0.0146 & 1.779 & 2.482 & 0.0169 & 0.206 \\
		Shampoo & 2.034 & 3.212 & 0.0131 & 1.614 & 1.836 & 0.0149 & 0.376 \\
		Muon    & 2.010 & 3.031 & 0.0125 & 1.579 & 1.755 & 0.0145 & 0.402 \\
		MuSGD   & \underline{1.972} & \underline{2.907} & \underline{0.0121} & \underline{1.555} & \underline{1.688} & \underline{0.0142} & \underline{0.427} \\
		MiMuon  & \textbf{1.780} & \textbf{2.814} & \textbf{0.0106} & \textbf{1.467} & \textbf{1.553} & \textbf{0.0130} & \textbf{0.461} \\
		\bottomrule
	\end{tabular}
\end{table*}

\subsection{Training Qwen3-0.6B}
\label{subsec:qwen}
We begin with language modeling, where the optimizer must make steady progress through many dense matrix transformations rather than a small number of task-specific layers. For this setting, we use Qwen3-0.6B~\citep{qwen3technicalreport}, a decoder-only Transformer built from repeated self-attention and feed-forward blocks. We train the model from scratch on WikiText-103, a long-form language modeling benchmark with a substantially larger vocabulary and longer contexts than toy character-level corpora. The experiment uses a sequence length of 1024 and a global batch size of 16 sequences, so the comparison emphasizes how efficiently each optimizer uses a limited number of parameter updates.

Figure~\ref{fig:qwen_curves} shows the training and validation loss curves. From these results, our MiMuon descends quickly in the early phase and keeps improving after the easiest loss reduction has already been taken. This behavior is consistent with the design of the method: it can use the matrix-normalized direction when it is productive, while avoiding overly aggressive matrix updates when the local update scale becomes small. The result is both the lowest training loss and the best validation loss among the compared methods.

Table~\ref{tab:qwen_results} gives the same picture in numbers, where lower loss values  is better, while higher fitness is better.  Our MiMuon lowers the final validation loss by 0.403 compared with MuSGD and by 0.457 compared with Muon. Its best validation loss is also the lowest among all methods, so the improvement is not just a single favorable endpoint.

\subsection{Training YOLO26m}
\label{subsec:yolo}
The second experiment moves from token prediction to dense visual recognition. Here the optimizer must balance several loss terms while producing parameters that support accurate localization and classification. We use YOLO26m~\cite{sapkota2025yolo26}, a medium-scale one-stage detector that predicts object categories and bounding boxes from dense multi-scale image features. Unlike language modeling, the optimization target is a weighted combination of box regression, classification, and distribution focal loss (DFL). We train YOLO26m from scratch on Pascal VOC using 640-resolution images and a batch size of 32.

Figure~\ref{fig:yolo_train_losses} shows the three training loss components. Our MiMuon reduces box, classification, and distribution focal losses more effectively across training, and it gives the lowest loss across localization, classification, and distribution focal loss terms. Figure~\ref{fig:yolo_map} reports validation mAP, where the same pattern appears in the final detector quality. Our MiMuon algorithm achieves the best final mAP on both metrics. The gain is therefore not limited to a single scalar loss; it also carries over to the accuracy metric used to judge the detector.

Table~\ref{tab:yolo_results} summarizes the detection results. Our MiMuon gives the lowest training and validation losses for all three YOLO loss components, and it improves the final fitness score by 3.36 points over MuSGD and by 5.92 points over Muon. Taken together, the Qwen and YOLO results show that our MiMuon is not specialized to one architecture or loss function.

\section{Conclusion}
\vspace*{-8pt}
In the paper, we studied  generalization of the Muon optimizer under the nonconvex setting. We proved that the Muon optimizer has a generalization error of $O\big(\frac{1}{N\kappa^{T}}\big)$, where $\kappa$ denotes minimum difference between singular values of gradient estimate. 
To improve generalization of Muon, we proposed an effective mixed Muon (MiMuon) optimizer by cautiously using orthogonalization of gradient, which is a hybrid of Muon and momentum-based SGD optimizers. Then we prove that our MiMuon optimizer has a lower generalization error of $O\big(\frac{1}{N}\big)$ than $O\big(\frac{1}{N\kappa^{T}}\big)$ of Muon optimizer, since $\kappa$ generally is very small. Meanwhile, we  proved that our MiMuon algorithm has the same convergence rate of $O(\frac{1}{T^{1/4}})$ as the Muon algorithm.

\small

\bibliographystyle{plain}

\bibliography{MiMuon}

\newpage

\appendix

\section{Generalization Analysis}
In this subsection, we provide a detailed generalization analysis for both the Muon and
our MiMuon algorithms, respectively.

\begin{theorem} \label{th:t1} (\textbf{Restatement of Theorem~\ref{th:1}})
	Assume the sequence $\{W_t,M_t\}_{t=0}^T$ is generated
	from Algorithm~\ref{alg:1} on dataset $S=\{\xi_1,\xi_2,\cdots,\xi_N\}$. Let $\{\sigma_i(M_t)\}_{i=1}^r$ denote
	the singular-value of momentum matrix $M_t$ for $t=1,2,\cdots,T$, and let $\kappa_t =  \min_{i\neq j}|\sigma_i(M_t)-\sigma_j(M_t)|>0 $.
	Under the above Assumptions~\ref{ass:s1},~\ref{ass:g} and~\ref{ass:v}, let
	$\eta=O(1)$, $\beta=O(1)$ with $\beta\in (0,1]$ and $\lambda=0(1)$ with $\lambda <\frac{1}{\eta}$, then we have
	\begin{align}
		|\E [F(W_T) - F_S(W_T)]| \leq O\Big(\frac{1}{N\kappa^{T}}\Big), \nonumber
	\end{align}
	where $\kappa = \min_{1\leq t \leq T} \kappa_t$.
\end{theorem}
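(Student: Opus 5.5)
We follow the uniform-stability route of \cite{hardt2016train}. By Lemma~\ref{lem:gs}, it suffices to bound the function-value stability $\sup_\xi \E[f(W_T;\xi)-f(W_T';\xi)]$, where $\{W_t,M_t\}$ and $\{W_t',M_t'\}$ are the trajectories of Algorithm~\ref{alg:1} run on $S$ and on $S^{(i)}$ with the same sampling randomness. Assumption~\ref{ass:g} reduces this to $G\,\E\|W_T-W_T'\|_\F$. So we track $\delta_t=\|W_t-W_t'\|_\F$ and $\Delta_t=\|M_t-M_t'\|_\F$ jointly by induction on $t$, starting from $\delta_0=0$ and $\Delta_0=0$.

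\textbf{Perturbation of the polar factor.} This is the key ingredient, and the step I expect to be the main obstacle. We need a bound of the form
\begin{align}
\|U_tV_t^\top-U_t'V_t'^\top\|_\F \le \frac{c}{\kappa_t}\|M_t-M_t'\|_\F . \nonumber
\end{align}
I would try to derive it by differentiating the SVD. Write $\mathcal{G}(M,0)=UV^\top$. The derivative of $U V^\top$ along a direction $E$ has off-diagonal coefficients of the form $(\tilde E_{ij}-\tilde E_{ji})/(\sigma_i+\sigma_j)$ and $(\tilde E_{ij}+\tilde E_{ji})/(\sigma_i-\sigma_j)$-type terms. These arise from the rotation of the singular vectors, which Davis--Kahan/Wedin bounds in terms of the singular gaps. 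Bounding every denominator below by $\kappa_t$ (the sum of two distinct singular values is at least their difference) gives the displayed bound. We then integrate along the segment from $M_t'$ to $M_t$. That requires the gap to stay at least of order $\kappa$ along the segment, which I would simply assume, in the spirit of taking $\kappa=\min_t\kappa_t$ over both runs. If that fails, I would fall back on Wedin's $\sin\Theta$ theorem directly.

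\textbf{Recursion.} Next we write the two coupled recursions.
\begin{align}
\Delta_t &\le (1-\beta)\Delta_{t-1} + \beta\|\nabla f(W_{t-1};\xi_t)-\nabla f(W_{t-1}';\xi_t')\|_\F, \nonumber\\
\delta_t &\le (1-\eta\lambda)\delta_{t-1} + \frac{c\eta}{\kappa_t}\Delta_t . \nonumber
\end{align}
For the gradient difference, when the sampled index is not $i$ we use smoothness, treating $f(\cdot;\xi)$ as $L$-smooth as in standard stability arguments; this gives $\beta L\delta_{t-1}$. With probability $1/N$ the sampled index is $i$, and then we use the crude bound $2\beta G$, since gradients are bounded by $G$ under Assumption~\ref{ass:g}. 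Taking expectations and setting $a_t=\E\delta_t+\E\Delta_t$, we get
\begin{align}
a_t\le \Big(1+\beta L+\frac{c\eta}{\kappa}\Big)a_{t-1}+\frac{2\beta G}{N}\Big(1+\frac{c\eta}{\kappa}\Big). \nonumber
\end{align}

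\textbf{Unrolling.} We unroll by induction, absorbing constants into $O(1)$ since $\eta,\beta,\lambda=O(1)$. When $\kappa$ is small, the per-step growth factor is $O(1/\kappa)$, so the induction gives $a_T\le \sum_{s\le T}O(\kappa^{-1})^{T-s}\cdot O\big(\frac{1}{N\kappa}\big)=O\big(\frac{1}{N\kappa^T}\big)$, with the $T$-dependent constant $C^T$ absorbed into $O(\cdot)$ as the paper does. Multiplying by $G$ and invoking Lemma~\ref{lem:gs} then yields $|\E[F(W_T)-F_S(W_T)]|\le O\big(\frac{1}{N\kappa^T}\big)$.
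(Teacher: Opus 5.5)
Your architecture is the paper's: uniform stability through Lemma~\ref{lem:gs} with a shared index sequence, coupled recursions for $\E\|W_t-W_t'\|_\F$ and $\E\|M_t-M_t'\|_\F$, a per-step factor $O(1/\kappa)$ from the orthogonalization, and an unrolling that hides $C^T$ in $O(\cdot)$. Your use of $L$-smoothness of each $f(\cdot;\xi)$ goes beyond Assumption~\ref{ass:s1}, but the paper's proof uses it tacitly too. Your $2\beta G$ bound for the replaced sample is sounder than the paper's pointwise use of the variance bound.

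The genuine gap is the step you flagged: $\|U_tV_t^\top-U_t'V_t'^\top\|_\F\le \frac{c}{\kappa_t}\|M_t-M_t'\|_\F$ is false. Neither your argument nor the paper's (Davis--Kahan applied to $U_t$ and $V_t$ separately) can establish it. Take $M=\mathrm{diag}(\epsilon,1)$ and $M'=\mathrm{diag}(-\epsilon,1)$ with small $\epsilon>0$. Both have gap $1-\epsilon$ and identical left and right singular subspaces, so every $\sin\Theta$ distance is zero. Yet $UV^\top=I$ and $U'V'^\top=\mathrm{diag}(-1,1)$, giving $2$ on the left against $2c\epsilon/(1-\epsilon)$ on the right. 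Likewise $\|U-U'\|_\F+\|V-V'\|_\F\ge 2$ for every choice of SVDs, so the paper's inequality fails too. Davis--Kahan and Wedin control each singular vector only up to its own sign, while $UV^\top$ depends on how the signs of $u_j$ and $v_j$ are paired. That pairing flips when a small singular value passes through zero, so your Wedin fallback does not help either.

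Your derivative computation also misidentifies the denominators. With $\tilde E=U^\top EV$, the $(\sigma_i-\sigma_j)^{-1}$ terms from rotating $U$ and $V$ cancel in $UV^\top$. For nonsingular square $M$ the derivative is $UYV^\top$ with $Y_{ij}=(\tilde E_{ij}-\tilde E_{ji})/(\sigma_i+\sigma_j)$. For $m>n$ there is an extra term $U_\perp ZV^\top$ with $Z_{kj}=(U_\perp^\top EV)_{kj}/\sigma_j$, and symmetrically for $m<n$.

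For rectangular matrices the local Lipschitz constant is therefore of order $1/\sigma_{\min}(M_t)$, which the gap does not control: $\sigma=(\epsilon,1)$ has gap $1-\epsilon$. For square matrices your bound $\sigma_i+\sigma_j\ge\kappa$ is fine pointwise. However, integrating along $[M_t',M_t]$ is valid only if the segment avoids singular matrices, where the polar factor is undefined and jumps. A gap condition along the segment does not exclude this: the midpoint $\mathrm{diag}(0,1)$ above still has gap $1$.

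What the step really needs is control of the smallest singular values of $M_t$ and $M_t'$ (and, in your argument, of the segment between them). One route is the classical polar-factor perturbation bounds of the form $\frac{c}{\sigma_{\min}(M_t)+\sigma_{\min}(M_t')}\|M_t-M_t'\|_\F$. With that substitution your recursion and unrolling go through verbatim. They then yield a bound in terms of $\min_t\sigma_{\min}$ over both trajectories, which is a different statement from the one with $\kappa$.
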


\begin{proof}
	Implementing Algorithm~\ref{alg:1} on datasets $S$ and $S^{(i)}$ with
	the same random index sequence $\{j_t\}_{t=1}^T$, and let $\{W_t\}_{t=1}^T$ and $\{W_t^{(i)}\}_{t=1}^T$ be generated from Algorithm~\ref{alg:1} with $S$ and $S^{(i)}$.
	Let $M_t = \beta\nabla f(W_{t-1};\xi_{t}) + (1-\beta)M_{t-1}$ and $M^{(i)}_t = \beta\nabla f(W_{t-1}^{(i)};\xi_{t}) + (1-\beta)M^{(i)}_{t-1}$ is generated from Algorithm~\ref{alg:1}.
	
	Let $M_t=U_t\Sigma_t V_t^\top$ and $M^{(i)}_t=U^{(i)}_t\Sigma^{(i)}_t (V^{(i)}_t)^\top$, where $\Sigma_t =\mbox{diag}(a_t)$ with $a_t=(\sigma_1(M_t),\cdots,\sigma_r(M_t))$ and
	$\Sigma^{(i)}_t =\mbox{diag}(a^{(i)}_t)$ with $a^{(i)}_t=(\sigma_1(M^{(i)}_t),\cdots,\sigma_r(M^{(i)}_t))$. From Algorithm~\ref{alg:1}, we have
	\begin{align}
		W_{t} = W_{t-1} - \eta (U_tV_t^\top+\lambda W_{t-1}), \quad W^{(i)}_{t} = W^{(i)}_{t-1} - \eta (U^{(i)}_t(V^{(i)}_t)^\top+\lambda W^{(i)}_{t-1}).
	\end{align}
	Then we have
	\begin{align}
		W_{t} - W^{(i)}_{t} = (1-\eta\lambda)(W_{t-1} -W^{(i)}_{t-1}) - \eta (U_tV_t^\top - U^{(i)}_t(V^{(i)}_t)^\top).
	\end{align}
	We can obtain
	\begin{align} \label{eq:w1}
		& \|W_{t} - W^{(i)}_{t}\|_\F \nonumber \\
		& = \|(1-\eta\lambda)(W_{t-1} -W^{(i)}_{t-1}) - \eta (U_tV_t^\top - U^{(i)}_t(V^{(i)}_t)^\top)\|_\F \nonumber \\
		& \leq (1-\eta\lambda)\|W_{t-1} -W^{(i)}_{t-1}\|_\F + \eta \|U_tV_t^\top - U^{(i)}_t(V^{(i)}_t)^\top\|_\F \nonumber \\
		& \leq (1-\eta\lambda)\|W_{t-1} -W^{(i)}_{t-1}\|_\F + \eta \|U_tV_t^\top -U_t(V^{(i)}_t)^\top \|_\F + \eta\|U_t(V^{(i)}_t)^\top - U^{(i)}_t(V^{(i)}_t)^\top\|_\F \nonumber \\
		& \leq (1-\eta\lambda)\|W_{t-1} -W^{(i)}_{t-1}\|_\F + \eta \|V_t -V^{(i)}_t \|_\F\|U_t\|_2 + \eta\|U_t - U^{(i)}_t\|_\F\|V^{(i)}_t\|_2 \nonumber \\
		& = (1-\eta\lambda)\|W_{t-1} -W^{(i)}_{t-1}\|_\F + \eta \|V_t -V^{(i)}_t \|_\F + \eta\|U_t - U^{(i)}_t\|_\F \nonumber \\
		& \leq (1-\eta\lambda)\|W_{t-1} -W^{(i)}_{t-1}\|_\F + \frac{2\sqrt{2}\eta}{\kappa_t} \|M_t - M^{(i)}_t\|_\F \nonumber \\
		& \leq (1-\eta\lambda)\|W_{t-1} -W^{(i)}_{t-1}\|_\F + \frac{2\sqrt{2}\eta}{\kappa} \|M_t - M^{(i)}_t\|_\F ,
	\end{align}
	where the second last inequality holds by the singular-vector version of the Davis-Kahan $\sin\theta$ theorem (Corollary 3 in \cite{yu2015useful}) with $\kappa_t = \min_{i\neq j}|\sigma_i(M_t)-\sigma_j(M_t)|>0$, and
	the last inequality is due to $\kappa = \min_{1\leq t \leq T} \kappa_t$.
	
	Next considering the term $\|M_t - M^{(i)}_t\|_\F$, we have
	\begin{align}
		\E \|M_t - M^{(i)}_t\|_\F & = \|M_t - \nabla F(W_t) +\nabla F(W_t) -\nabla F(W^{(i)}_t) + \nabla F(W^{(i)}_t) - M^{(i)}_t\|_\F \nonumber \\
		& \leq \|M_t - \nabla F(W_t)\|_\F + \|\nabla F(W_t) -\nabla F(W^{(i)}_t)\|_\F + \|\nabla F(W^{(i)}_t) - M^{(i)}_t\|_\F \nonumber \\
		& \leq \|M_t - \nabla F(W_t)\|_\F + \|\nabla F(W^{(i)}_t) - M^{(i)}_t\|_\F  +
		L\|W_t-W^{(i)}_t\|_\F,
	\end{align}
	where the last inequality holds by Assumption~\ref{ass:s1}.
	
	When $j_1\neq i$ with probability $1-\frac{1}{N}$, since $W_0=W_0^{(i)}$, $M_1=\beta\nabla f(W_0;\xi_{j_1})$ and
	$M_1^{(i)}=\beta\nabla f(W_0^{(i)};\xi_{j_1})$, we have $M_1=M_1^{(i)}$.
	
	When $j_1= i$ with probability $\frac{1}{N}$, since $M_1=\beta\nabla f(W_0;\xi_{i})$ and
	$M_1^{(i)}=\beta\nabla f(W_0^{(i)};\tilde{\xi}_{1})$, we have
	\begin{align}
		& \E \|M_{1} - M^{(i)}_{1}\|_\F \nonumber \\
		& = \frac{1}{N}\|\beta\nabla f(W_0;\xi_{i})-\beta\nabla f(W_0^{(i)};\tilde{\xi}_{1})\|_\F  \nonumber \\
		& = \frac{\beta}{N} \|\nabla f(W_0;\xi_{i}) - \nabla F(W_0) + \nabla F(W_0) - \nabla F(W^{(i)}_0) +\nabla F(W^{(i)}_0) -\nabla f(W_0^{(i)};\tilde{\xi}_{1})\|_\F \nonumber \\
		& \leq  \frac{\beta}{N}\|\nabla f(W_0;\xi_{i}) - \nabla F(W_0)\|_\F + \frac{\beta}{N}\|\nabla F(W_0) - \nabla F(W^{(i)}_0)\|_\F \nonumber \\
		&\quad + \frac{\beta}{N}\|\nabla F(W^{(i)}_0) -\nabla f(W_0^{(i)};\tilde{\xi}_{1})\|_\F \nonumber \\
		& \leq \frac{2\beta\sigma}{N}.
	\end{align}
	Thus, we have $\E \|M_{1} - M^{(i)}_{1}\|_\F\leq \frac{2\beta\sigma}{N}$. According to the above
	inequality~(\ref{eq:w1}), we have
	\begin{align}
		\E\|W_{1} - W^{(i)}_{1}\|_\F
		\leq (1-\eta\lambda) \E\|W_{0} -W^{(i)}_{0}\|_\F + \frac{2\sqrt{2}\eta}{\kappa} \E\|M_1 - M^{(i)}_1\|_\F
		\leq \frac{4\sqrt{2}\beta\eta\sigma}{N\kappa},
	\end{align}
	where the last inequality is due to $W_{0} =W^{(i)}_{0}$ and $\E \|M_{1} - M^{(i)}_{1}\|_\F\leq \frac{2\beta\sigma}{N}$. 
	
	Let $\psi_1=\frac{2\beta\sigma}{N}$ and $\phi_1= \frac{4\sqrt{2}\beta\eta\sigma}{N\kappa}$. Further let $\eta=O(1)$, $\beta=O(1)$ and $\sigma=O(1)$,
	we have
	\begin{align}
		\E \|M_{1} - M^{(i)}_{1}\|_\F\leq \psi_1 = O(\frac{1}{N}), \quad
		\E \|W_{1} - W^{(i)}_{1}\|_\F\leq \phi_1 = O(\frac{1}{N\kappa}).
	\end{align}
	
	When $j_2\neq i$ with probability $1-\frac{1}{N}$, since $M_2=\beta\nabla f(W_1;\xi_{j_2})+ (1-\beta)M_{1}$ and
	$M^{(i)}_2=\beta\nabla f(W^{(i)}_1;\xi_{j_2})+ (1-\beta)M^{(i)}_{1}$, we have
	\begin{align} \label{eq:m2}
		& \E \|M_{2} - M^{(i)}_{2}\|_\F \nonumber \\
		& = (1-\frac{1}{N})\|\beta\nabla f(W_1;\xi_{j_2})+ (1-\beta)M_{1}-\beta\nabla f(W^{(i)}_1;\xi_{j_2})- (1-\beta)M^{(i)}_{1}\|_\F  \nonumber \\
		& \leq (1-\frac{1}{N})(1-\beta)\|M_{1}-M^{(i)}_{1}\|_\F + (1-\frac{1}{N})\beta\| \nabla f(W_1;\xi_{j_2})-\nabla f(W^{(i)}_1;\xi_{j_2})\|_\F \nonumber \\
		& \leq (1-\frac{1}{N})(1-\beta)\|M_{1}-M^{(i)}_{1}\|_\F + (1-\frac{1}{N})\beta L\|W_1 - W^{(i)}_1\|_\F \nonumber \\
		& \leq (1-\frac{1}{N})(1-\beta)\psi_1 + (1-\frac{1}{N})\beta L\phi_1.
	\end{align}
	When $j_2= i$ with probability $\frac{1}{N}$, since $M_2=\beta\nabla f(W_1;\xi_{i})+ (1-\beta)M_{1}$ and
	$M^{(i)}_2=\beta\nabla f(W^{(i)}_1;\tilde{\xi}_{i})+ (1-\beta)M^{(i)}_{1}$, we have
	\begin{align} \label{eq:m3}
		\E \|M_{2} - M^{(i)}_{2}\|_\F
		& = \frac{1}{N}\|\beta\nabla f(W_1;\xi_{i})+ (1-\beta)M_{1}-\beta\nabla f(W^{(i)}_1;\tilde{\xi}_{i})- (1-\beta)M^{(i)}_{1}\|_\F  \nonumber \\
		& \leq (1-\beta)\frac{1}{N}\|M_{1}-M^{(i)}_{1}\|_\F + \beta\frac{1}{N}\| \nabla f(W_1;\xi_{i})-\nabla f(W^{(i)}_1;\tilde{\xi}_{i})\|_\F \nonumber \\
		& = (1-\beta)\frac{1}{N}\|M_{1}-M^{(i)}_{1}\|_\F +  \beta\frac{1}{N}\big(\|\nabla f(W_1;\xi_{i}) - \nabla F(W_1)\|_\F \nonumber \\
		& \quad + \E \|\nabla F(W_1) - \nabla F(W_1^{(i)})\|_\F + \E\|\nabla F(W_1^{(i)})-\nabla f(W^{(i)}_1;\tilde{\xi}_{i})\|_\F \big) \nonumber \\
		& \leq (1-\beta)\frac{\psi_1}{N} +\frac{2\beta\sigma}{N} + \frac{\beta L\phi_1}{N}.
	\end{align}
	According to above inequalities~(\ref{eq:m2}) and~(\ref{eq:m3}), we have
	\begin{align} \label{eq:m4}
		\E \|M_{2} - M^{(i)}_{2}\|_\F
		& \leq (1-\frac{1}{N})(1-\beta)\psi_1 + (1-\frac{1}{N})\beta L\phi_1 + (1-\beta)\frac{\psi_1}{N} +\frac{2\beta\sigma}{N} + \frac{\beta L\phi_1}{N} \nonumber \\
		& = (1-\beta)\psi_1 + \beta L\phi_1 +\frac{2\beta\sigma}{N}=\psi_2,
	\end{align}
	where $\psi_2=(1-\beta)\psi_1 + \beta L\phi_1 +\frac{2\beta\sigma}{N}$.
	
	According to the above inequality~(\ref{eq:w1}), we have
	\begin{align}
		\|W_{2} - W^{(i)}_{2}\|_\F & \leq (1-\eta\lambda)\|W_{1} -W^{(i)}_{1}\|_\F + \frac{2\sqrt{2}\eta}{\kappa} \|M_2 - M^{(i)}_2\|_\F \nonumber \\
		& \leq (1-\eta\lambda)\phi_1 +\frac{2\sqrt{2}\eta}{\kappa} \psi_2 = \phi_2,
	\end{align}
	where $\phi_2 =(1-\eta\lambda)\phi_1 +\frac{2\sqrt{2}\eta}{\kappa} \psi_2$.
	Let $\eta=O(1)$, $\beta=O(1)$, $L=O(1)$ and $\sigma=O(1)$, since $\psi_1 = O(\frac{1}{N})$ and $\phi_1 = O(\frac{1}{N\kappa})$,
	we can obtain
	\begin{align}
		& \psi_2=(1-\beta)\psi_1 + \beta L\phi_1 +\frac{2\beta\sigma}{N} = O(\frac{1}{N\kappa})  \\
		& \phi_2 =(1-\eta\lambda)\phi_1 +\frac{2\sqrt{2}\eta}{\tau} \psi_2 = O(\frac{1}{N\kappa^2}).
	\end{align}
	
	Based on mathematical induction, we assume $\E \|M_t - M_t^{(i)}\| \leq \psi_t$ with
	$\psi_t=O(\frac{1}{N\kappa^{t-1}})$, and $\E\|W_t-W_t^{(i)}\|\leq \phi_t$ with $\phi_t=O(\frac{1}{N\kappa^t})$.
	
	When $j_{t+1}\neq i$ with probability $1-\frac{1}{N}$,
	since $M_{t+1}=\beta\nabla f(W_t;\xi_{j_{t+1}})+ (1-\beta)M_{t}$ and
	$M^{(i)}_{t+1}=\beta\nabla f(W^{(i)}_t;\xi_{j_{t+1}})+ (1-\beta)M^{(i)}_{t}$, we have
	\begin{align} \label{eq:m5}
		& \E \|M_{t+1} - M^{(i)}_{t+1}\|_\F \nonumber \\
		& = (1-\frac{1}{N})\|\beta\nabla f(W_t;\xi_{j_{t+1}})+ (1-\beta)M_{t}-\beta\nabla f(W^{(i)}_t;\xi_{j_{t+1}})- (1-\beta)M^{(i)}_{t}\|_\F  \nonumber \\
		& \leq (1-\frac{1}{N})(1-\beta)\|M_{t}-M^{(i)}_{t}\|_\F + (1-\frac{1}{N})\beta\| \nabla f(W_t;\xi_{j_{t+1}})-\nabla f(W^{(i)}_t;\xi_{j_{t+1}})\|_\F \nonumber \\
		& \leq (1-\frac{1}{N})(1-\beta)\|M_{t}-M^{(i)}_{t}\|_\F + (1-\frac{1}{N})\beta L\|W_t - W^{(i)}_t\|_\F \nonumber \\
		& \leq (1-\frac{1}{N})(1-\beta)\psi_t + (1-\frac{1}{N})\beta L\phi_t.
	\end{align}
	When $j_{t+1}= i$ with probability $\frac{1}{N}$, since $M_{t+1}=\beta\nabla f(W_t;\xi_{i})+ (1-\beta)M_{t}$ and
	$M^{(i)}_{t+1}=\beta\nabla f(W^{(i)}_t;\tilde{\xi}_{i})+ (1-\beta)M^{(i)}_{t}$, we have
	\begin{align} \label{eq:m6}
		\E \|M_{t+1} - M^{(i)}_{t+1}\|_\F
		& = \frac{1}{N}\|\beta\nabla f(W_t;\xi_{i})+ (1-\beta)M_{t}-\beta\nabla f(W^{(i)}_t;\tilde{\xi}_{i})- (1-\beta)M^{(i)}_{t}\|_\F  \nonumber \\
		& \leq (1-\beta)\frac{1}{N}\|M_{t}-M^{(i)}_{t}\|_\F + \beta\frac{1}{N}\| \nabla f(W_t;\xi_{i})-\nabla f(W^{(i)}_t;\tilde{\xi}_{i})\|_\F \nonumber \\
		& = (1-\beta)\frac{1}{N}\|M_{t}-M^{(i)}_{t}\|_\F +  \beta\frac{1}{N}\big(\|\nabla f(W_t;\xi_{i}) - \nabla F(W_t)\|_\F \nonumber \\
		& \quad + \E \|\nabla F(W_t) - \nabla F(W_t^{(i)})\|_\F + \E\|\nabla F(W_t^{(i)})-\nabla f(W_t^{(i)};\tilde{\xi}_{i})\|_\F \big) \nonumber \\
		& \leq (1-\beta)\frac{\psi_t}{N} +\frac{2\beta\sigma}{N} + \frac{\beta L\phi_t}{N}.
	\end{align}
	Then we have
	\begin{align} \label{eq:m7}
		\E \|M_{t+1} - M^{(i)}_{t+1}\|_\F
		& \leq (1-\frac{1}{N})(1-\beta)\psi_t + (1-\frac{1}{N})\beta L\phi_t + (1-\beta)\frac{\psi_t}{N} +\frac{2\beta\sigma}{N} + \frac{\beta L\phi_t}{N} \nonumber \\
		& \leq (1-\beta)\psi_t + \beta L\phi_t+\frac{2\beta\sigma}{N} = \psi_{t+1},
	\end{align}
	where $\psi_{t+1} = (1-\beta)\psi_t + \beta L\phi_t+\frac{2\beta\sigma}{N}$.
	
	According to the above inequalities~(\ref{eq:w1}) and~(\ref{eq:m7}), we have
	\begin{align}
		\|W_{t+1} - W^{(i)}_{t+1}\|_\F & \leq (1-\eta\lambda)\|W_{t} -W^{(i)}_{t}\|_\F + \frac{2\sqrt{2}\eta}{\kappa} \|M_{t+1} - M^{(i)}_{t+1}\|_\F \nonumber \\
		& \leq (1-\eta\lambda)\phi_{t} +\frac{2\sqrt{2}\eta}{\kappa} \psi_{t+1} = \phi_{t+1},
	\end{align}
	where $\phi_{t+1} = (1-\eta\lambda)\phi_{t} +\frac{2\sqrt{2}\eta}{\kappa} \psi_{t+1}$.
	Let $\eta=O(1)$, $\beta=O(1)$, $L=O(1)$ and $\sigma=O(1)$, since $\psi_t=O(\frac{1}{N\kappa^{t-1}})$ and $\phi_t=O(\frac{1}{N\kappa^t})$,
	we can obtain
	\begin{align}
		& \psi_{t+1} = (1-\beta)\psi_t + \beta L\phi_t+\frac{2\beta\sigma}{N} = O(\frac{1}{N\kappa^{t}})  \\
		& \phi_{t+1} = (1-\eta\lambda)\phi_{t} +\frac{2\sqrt{2}\eta}{\kappa} \psi_{t+1} = O(\frac{1}{N\kappa^{t+1}}).
	\end{align}
	By using mathematical induction, we have
	\begin{align} \label{eq:w2}
		\E \|W_{T} - W_{T}^{(i)}\| \leq O\Big(\frac{1}{N\kappa^{T}}\Big).
	\end{align}
	
	By using Assumption~\ref{ass:g}, i,e., $G$-Lipschitz $f(W;\xi)$ for any $\xi \sim \mathcal{D}$, we have
	\begin{align}
		\E |f(W_T;\xi)-f(W^{(i)}_T;\xi)| \leq G \E \|W_{T} - W^{(i)}_{T}\|_\F \leq O\Big(\frac{1}{N\kappa^{T}}\Big),
	\end{align}
	where the last inequality is due to the above inequality~(\ref{eq:w2}) and $G=O(1)$.
	
	By using the lemma~\ref{lem:gs}, i.e., the uniform stability bound~\cite{shalev2010learnability,hardt2016train}, and taking expectations over $S$, $S^{(i)}$ and the algorithm's randomness, we
	can obtain
	\begin{align}
		|\E [F(W_T) - F_S(W_T)]| \leq O\Big(\frac{1}{N\kappa^{T}}\Big).
	\end{align}
	
\end{proof}

\begin{theorem} \label{th:t2} (\textbf{Restatement of Theorem~\ref{th:2}})
	Assume the sequence $\{W_t,M_t\}_{t=0}^T$ is generated
	from Algorithm~\ref{alg:2} on dataset $S=\{\xi_1,\xi_2,\cdots,\xi_N\}$. Under the Assumptions~\ref{ass:s1},~\ref{ass:g},~\ref{ass:v}, without loss of generality, let $\tau\geq 1$, and let
	$\eta=O(1)$, $\beta=O(1)$ with $\beta\in (0,1]$ and $\lambda=0(1)$ with $\lambda <\frac{1}{\eta}$. When the iteration number is relatively small (i.e., $T=O(1)$) set
	$\eta=O(1)$, otherwise set $\eta=\frac{1}{T}$, then we have
	\begin{align}
		|\E [F(W_T) - F_S(W_T)] | \leq O\Big(\frac{1}{N}\Big). \nonumber
	\end{align}
\end{theorem}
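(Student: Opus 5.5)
We follow the template of the proof of Theorem~\ref{th:t1}: couple two runs of Algorithm~\ref{alg:2} on $S$ and $S^{(i)}$ with the same index sequence $\{j_t\}$, track $\psi_t\geq \E\|M_t-M^{(i)}_t\|_\F$ and $\phi_t\geq \E\|W_t-W^{(i)}_t\|_\F$ by induction, and conclude with Assumption~\ref{ass:g} and Lemma~\ref{lem:gs}. The momentum recursion is unchanged from the Muon analysis, since line 5 of Algorithm~\ref{alg:2} coincides with Algorithm~\ref{alg:1}. Splitting on $j_{t+1}\neq i$ (probability $1-\frac1N$) and $j_{t+1}=i$ (probability $\frac1N$), using Assumption~\ref{ass:s1} and Assumption~\ref{ass:v}, we again obtain
\begin{align}
\psi_{t+1} = (1-\beta)\psi_t + \beta L\phi_t + \frac{2\beta\sigma}{N}, \qquad \psi_1=\frac{2\beta\sigma}{N}. \nonumber
\end{align}

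The new ingredient is a \emph{uniform} Lipschitz bound for the mixed map $M\mapsto \mathcal{G}(M,\alpha)$ with constant independent of $\kappa$. We would consider three cases at step $t$.
\begin{itemize}
\item[(a)] Both runs use the SGD branch. Then $\|\mathcal{G}(M_t,1)-\mathcal{G}(M^{(i)}_t,1)\|_\F=\|M_t-M^{(i)}_t\|_\F$.
\item[(b)] Both runs use the orthogonalized branch. Then both spectral gaps are at least $\tau$, and the Davis--Kahan step in (\ref{eq:w1}) gives a factor $\frac{2\sqrt2}{\tau}\leq 2\sqrt2$, since $\tau\geq 1$.
\item[(c)] The runs take different branches. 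We bound the difference crudely by $\|U_tV_t^\top\|_\F+\|M^{(i)}_t\|_\F$, or the symmetric version.
\end{itemize}
For case (c), we would try to argue that its contribution is also of order $\|M_t-M^{(i)}_t\|_\F$ plus lower-order terms. The gap function $\min_{i\neq j}|\sigma_i-\sigma_j|$ is $2$-Lipschitz in $M$ by Weyl's inequality. So the runs can disagree only when $M_t$ is within $O(\|M_t-M^{(i)}_t\|_\F)$ of the threshold $\tau$. In that regime both matrices have spectral gap comparable to $\tau$, and we can compare each with a common reference to get a bound $c\|M_t-M^{(i)}_t\|_\F$ with $c=O(1)$. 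Failing that, we would bound the mismatch event's contribution by $O(\sqrt r+\|M_t\|_\F)$ times its probability. We would then treat this as an additional $O(\frac1N)$ source term, arguing that the mismatch probability inherits the $\frac1N$ factor from the coupling.

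With a constant $c=\max(1,2\sqrt2)$ in hand, the parameter recursion becomes
\begin{align}
\phi_{t+1}\leq (1-\eta\lambda)\phi_t + c\,\eta\,\psi_{t+1}. \nonumber
\end{align}
Stacking $(\psi_t,\phi_t)$ gives a linear system with an $O(1)$ transition matrix and source $\frac{2\beta\sigma}{N}$, so $\phi_T\leq \frac{C(T,\eta)}{N}$. When $T=O(1)$ and $\eta=O(1)$, the constant $C$ is $O(1)$ directly, and the induction closes as in Theorem~\ref{th:t1} but without the $\kappa^{-1}$ amplification. When $\eta=\frac1T$, the amplification per step is at most $1+O(\frac1T)$ in the $\phi$-component. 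The $\psi$-component is contracted by $(1-\beta)$ and fed only by $\beta L\phi_t$. Hence a product of the form $(1+\frac{c'}{T})^T\leq e^{c'}$ keeps $C=O(1)$. The final step is $\E|f(W_T;\xi)-f(W_T^{(i)};\xi)|\leq G\phi_T=O(\frac1N)$, followed by Lemma~\ref{lem:gs}.

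The main obstacle is case (c) together with the large-$T$ regime. For case (c), the branch indicator is discontinuous, so a genuine Lipschitz constant can fail; we would first try the Weyl-based proximity argument. For large $T$, $\psi$ and $\phi$ feed each other, so we must check that the spectral radius of the coupled recursion is $1+O(\eta)$ rather than $1+O(1)$. This requires the $\beta L\phi$ feedback into $\psi$ to be re-injected into $\phi$ only through the factor $\eta=\frac1T$, which is what the choice $\eta=\frac1T$ is designed to ensure.
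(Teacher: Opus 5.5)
Your skeleton matches the paper's proof: the same coupling on $S$ and $S^{(i)}$, the same $(\psi_t,\phi_t)$ recursion, Davis--Kahan with $\tau\geq 1$ for a $\kappa$-free constant $2\sqrt2$, then Assumption~\ref{ass:g} and Lemma~\ref{lem:gs}. Your handling of the regime $\eta=\frac1T$ is sounder than the paper's. The potential $\Phi_t=\phi_t+\frac{c\eta(1-\beta)}{\beta}\psi_t$ satisfies $\Phi_{t+1}\leq(1+c\eta L)\Phi_t+\frac{2c\eta\sigma}{N}$, which gives $\phi_T\leq\frac{2c\sigma}{N}e^{cL}$. By contrast, the paper's induction ``$\phi_t=O(\frac tN)$ with $\eta=O(1)$'' hides constants that grow geometrically in $t$.

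The genuine gap is your case (c). The paper never considers it, since it always places both runs in the same branch, and neither of your fixes closes it.
\begin{itemize}
\item The proximity argument cannot work, because the discontinuity lies in the update map itself. At a single matrix $M$, $\mathcal{G}(M,0)-\mathcal{G}(M,1)=U(I-\Sigma)V^\top$ on the nonzero part. A matrix that passes the test with at least two nonzero singular values has them spread by at least $\tau\geq1$, so $\|I-\Sigma\|_\F\geq\frac12$. Hence $\|\mathcal{G}(M_t,0)-\mathcal{G}(M^{(i)}_t,1)\|_\F\geq\frac12-\|M_t-M^{(i)}_t\|_\F$, and no bound of the form $c\|M_t-M^{(i)}_t\|_\F$ exists.
\item The fallback needs the mismatch probability at each step to be $O(\frac1N)$, and nothing supplies that. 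Once $j_s=i$ for some $s\leq t$, the iterates $W_t$ and $W^{(i)}_t$ generally stay apart, so $M_t\neq M^{(i)}_t$ at every later step. Without a margin or anti-concentration assumption on the spectral gap near $\tau$, the two runs may straddle the threshold every time.
\end{itemize}
The only available bound is $\Pr(\exists s\leq t:\ j_s=i)\leq\frac tN$. Mismatches then contribute $O\big(\eta\max(\sqrt r,G)\frac{T^2}{N}\big)$, which is $O(\frac TN)$ for $\eta=\frac1T$. For $T=O(1)$ this is harmless: the trivial bound $\E\|W_T-W^{(i)}_T\|_\F\leq\frac TN\cdot 2\eta T\max(\sqrt r,G)$ already gives $O(\frac1N)$. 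The large-$T$ claim, however, is not established.

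Case (b) also fails as stated; here you inherit step (i) of the paper's (\ref{eq:w3}). Davis--Kahan controls each singular vector only up to sign, while $UV^\top$ needs the signs of $u_k$ and $v_k$ aligned simultaneously. The sensitivity of the orthogonal factor is governed by the smallest nonzero singular value, not by the gaps. For example, take $M=\mathrm{diag}(\tau+\epsilon,\epsilon)$ and $M'=\mathrm{diag}(\tau+\epsilon,-\epsilon)$. Both pass the test with gap exactly $\tau$, and $\|M-M'\|_\F=2\epsilon$. Yet $UV^\top=I$ and $U'V'^\top=\mathrm{diag}(1,-1)$ are at Frobenius distance $2$. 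A $\kappa$-free Lipschitz constant in case (b) therefore needs an extra hypothesis, such as a lower bound on the smallest nonzero singular value, or a different test.

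Finally, your momentum recursion uses $\|\nabla f(W;\xi)-\nabla f(W';\xi)\|_\F\leq L\|W-W'\|_\F$, i.e.\ per-sample smoothness. Assumption~\ref{ass:s1} assumes smoothness of $F$ only and does not provide this; the paper makes the same move.
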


\begin{proof}
	Implementing Algorithm~\ref{alg:2} on datasets $S$ and $S^{(i)}$ with the same random index sequence $\{j_t\}_{t=1}^T$, and let $\{W_t\}_{t=1}^T$ and $\{W_t^{(i)}\}_{t=1}^T$ be generated from Algorithm~\ref{alg:2} with $S$ and $S^{(i)}$.
	Let $M_t = \beta\nabla f(W_{t-1};\xi_{t}) + (1-\beta)M_{t-1}$ and $M^{(i)}_t = \beta\nabla f(W_{t-1}^{(i)};\xi_{t}) + (1-\beta)M^{(i)}_{t-1}$ is generated from Algorithm~\ref{alg:2}.
	
	Without loss of generality, let $\tau \geq 1$ in Algorithm~\ref{alg:2}. When $\min_{i\neq j\in S_t} |\Sigma_{t,ii}-\Sigma_{t,jj}| \geq \tau \geq 1$ with $S_t=\{i|\Sigma_{t,ii} \neq 0, 1\leq i\leq d\}$,
	since $W_{t} = W_{t-1} - \eta (U_tV_t^\top+\lambda W_{t-1})$ and  $W^{(i)}_{t} = W^{(i)}_{t-1} - \eta (U^{(i)}_t(V^{(i)}_t)^\top+\lambda W^{(i)}_{t-1})$, we have
	\begin{align} \label{eq:w3}
		& \|W_{t} - W^{(i)}_{t}\|_\F \nonumber \\
		& = \|(1-\eta\lambda)(W_{t-1} -W^{(i)}_{t-1}) - \eta (U_tV_t^\top - U^{(i)}_t(V^{(i)}_t)^\top)\|_\F \nonumber \\
		& \leq (1-\eta\lambda)\|W_{t-1} -W^{(i)}_{t-1}\|_\F + \eta \|U_tV_t^\top - U^{(i)}_t(V^{(i)}_t)^\top\|_\F \nonumber \\
		& \leq (1-\eta\lambda)\|W_{t-1} -W^{(i)}_{t-1}\|_\F + \eta \|U_tV_t^\top -U_t(V^{(i)}_t)^\top \|_\F + \eta\|U_t(V^{(i)}_t)^\top - U^{(i)}_t(V^{(i)}_t)^\top\|_\F \nonumber \\
		& \leq (1-\eta\lambda)\|W_{t-1} -W^{(i)}_{t-1}\|_\F + \eta \|V_t -V^{(i)}_t \|_\F\|U_t\|_2 + \eta\|U_t - U^{(i)}_t\|_\F\|V^{(i)}_t\|_2 \nonumber \\
		& = (1-\eta\lambda)\|W_{t-1} -W^{(i)}_{t-1}\|_\F + \eta \|V_t -V^{(i)}_t \|_\F + \eta\|U_t - U^{(i)}_t\|_\F \nonumber \\
		& \mathop{\leq}^{(i)} (1-\eta\lambda)\|W_{t-1} -W^{(i)}_{t-1}\|_\F + \frac{2\sqrt{2}\eta}{\kappa_t} \|M_t - M^{(i)}_t\|_\F  \nonumber \\
		& \mathop{\leq}^{(ii)} (1-\eta\lambda)\|W_{t-1} -W^{(i)}_{t-1}\|_\F + \frac{2\sqrt{2}\eta}{\tau} \|M_t - M^{(i)}_t\|_\F \nonumber \\
		& \leq (1-\eta\lambda)\|W_{t-1} -W^{(i)}_{t-1}\|_\F + 2\sqrt{2}\eta\|M_t - M^{(i)}_t\|_\F,
	\end{align}
	where the inequality~$(i)$ holds by the singular-vector version of the Davis-Kahan $\sin\theta$ theorem (Corollary 3 in \cite{yu2015useful}) with $\kappa_t = \min_{i\neq j}|\sigma_i(M_t)-\sigma_j(M_t)|>0$, and
	the inequality~$(ii)$ is due to $\kappa_t \geq \tau\geq 1$, and the last inequality holds by $\tau\geq 1$.
	
	When $\min_{i\neq j\in S_t} |\Sigma_{t,ii}-\Sigma_{t,jj}| < \tau $
	with $S_t=\{i|\Sigma_{t,ii} \neq 0, 1\leq i\leq d\}$,
	since $W_{t} = W_{t-1} - \eta (M_t+\lambda W_{t-1})$ and  $W^{(i)}_{t} = W^{(i)}_{t-1} - \eta (M^{(i)}_t+\lambda W^{(i)}_{t-1})$, we have
	\begin{align} \label{eq:w4}
		\|W_{t} - W^{(i)}_{t}\|_\F & = \|(1-\eta\lambda)(W_{t-1} -W^{(i)}_{t-1}) - \eta (M_t - M^{(i)}_t)\|_\F \nonumber \\
		& \leq (1-\eta\lambda)\|W_{t-1} -W^{(i)}_{t-1}\|_\F + \eta \|M_t - M^{(i)}_t\|_\F \nonumber \\
		& \leq (1-\eta\lambda)\|W_{t-1} -W^{(i)}_{t-1}\|_\F + 2\sqrt{2}\eta\|M_t - M^{(i)}_t\|_\F,
	\end{align}
	Thus, we have
	\begin{align} \label{eq:w5}
		\|W_{t} - W^{(i)}_{t}\|_\F \leq (1-\eta\lambda)\|W_{t-1} -W^{(i)}_{t-1}\|_\F + 2\sqrt{2}\eta\|M_t - M^{(i)}_t\|_\F.
	\end{align}
	According to the above inequality~(\ref{eq:w5}), following the above proof of Theorem~\ref{th:1},
	we have
	\begin{align}
		\E \|M_{1} - M^{(i)}_{1}\|_\F\leq \frac{2\beta\sigma}{N}=\psi_1, \quad
		\E\|W_{1} - W^{(i)}_{1}\|_\F \leq \frac{4\sqrt{2}\beta\eta\sigma}{N}=\phi_1.
	\end{align}
	Let $\eta=O(1)$, $\beta=O(1)$ and $\sigma=O(1)$, we can obtain
	\begin{align}
		\E \|M_{1} - M^{(i)}_{1}\|_\F\leq \psi_1 =O(\frac{1}{N}), \quad
		\E\|W_{1} - W^{(i)}_{1}\|_\F \leq \phi_1=O(\frac{1}{N}).
	\end{align}
	
	\textbf{If} the iteration number is small (i.e., $T=O(1)$),
	let $\eta=O(1)$, $\lambda\in [0,\frac{1}{\eta})$, $\beta=O(1)$ with $\beta\in (0,1)$, $\sigma=O(1)$ and $L=O(1)$.
	Following the above proof of Theorem~\ref{th:1}, assume $\E \|M_t - M_t^{(i)}\|_\F \leq \psi_t$ with
	$\psi_t=O(\frac{1}{N})$, and $\E\|W_t-W^{(i)}_t\|_\F \leq \phi_t$ with $\phi_t=O(\frac{1}{N})$, then we have
	\begin{align}
		& \E \|M_{t+1} - M^{(i)}_{t+1}\|_\F \leq \psi_{t+1}=(1-\beta)\psi_t + \beta L\phi_t+\frac{2\beta\sigma}{N}=O(\frac{1}{N}), \nonumber \\
		& \E \|W_{t+1} - W^{(i)}_{t+1}\|_\F \leq \phi_{t+1}=(1-\eta\lambda)\phi_{t} +2\sqrt{2}\eta \psi_{t+1} =O(\frac{1}{N}). \nonumber
	\end{align}
	By using the mathematical induction, then we can obtain
	\begin{align} \label{eq:w6}
		\E \|W_{T} - W^{(i)}_{T}\|_\F  \leq  O(\frac{1}{N}).
	\end{align}
	
	\textbf{If} the iteration number $T$ is large, we consider the iteration number $t\geq1$
	in the generalization analysis. By using the mathematical induction, due to recursion of the above inequality~(\ref{eq:w5}), following the above proof of Theorem~\ref{th:1}, we assume $\E \|M_t - M^{(i)}_t\| \leq \psi_t$ with $\psi_t=O(\frac{t}{N})$, and $\E\|W_t-W_t^{(i)}\|\leq \phi_t$ with $\phi_t=O(\frac{t}{N})$.
	
	Let $\eta=O(1)$, $\lambda\in [0,\frac{1}{\eta})$, $\beta=O(1)$ with $\beta\in (0,1)$, $\sigma=O(1)$ and
	$L=O(1)$, we have
	\begin{align}
		& \E \|M_{t+1} - M^{(i)}_{t+1}\|_\F \leq \psi_{t+1} = (1-\beta)\psi_t + \beta L\phi_t+\frac{2\beta\sigma}{N} = O(\frac{t+1}{N})  \\
		& \E \|W_{t+1} - W^{(i)}_{t+1}\|_\F \leq \phi_{t+1} = (1-\eta\lambda)\phi_{t} +2\sqrt{2}\eta \psi_{t+1} = O(\frac{t+1}{N})
	\end{align}
	By using the mathematical induction, we have
	\begin{align}
		\E \|W_{T} - W_{T}^{(i)}\| \leq  O(\frac{T}{N}).
	\end{align}
	
	We further let $\eta=O( \textcolor{blue}{\frac{1}{T}})$, $\lambda\in [0,\frac{1}{\eta})$, $\beta=O(1)$ with $\beta\in (0,1)$, $\sigma=O(1)$ and
	$L=O(1)$, we can obtain
	\begin{align} \label{eq:w7}
		\E \|W_{T} - W^{(i)}_{T}\|  \leq  O(\frac{1}{N}).
	\end{align}
	
	By using Assumption~\ref{ass:g}, i,e., $G$-Lipschitz $f(W;\xi)$ for any $\xi \sim \mathcal{D}$, we have
	\begin{align}
		\E |f(W_T;\xi)-f(W^{(i)}_T;\xi)| \leq G \E \|W_{T} - W^{(i)}_{T}\|_\F \leq O\Big(\frac{1}{N}\Big),
	\end{align}
	where the last inequality is due to the above inequality~(\ref{eq:w6}) or (\ref{eq:w7}) and $G=O(1)$.
	
	By using the above lemma~\ref{lem:gs}, i.e., the uniform stability bound~\cite{shalev2010learnability,hardt2016train}, and taking expectations over $S$, $S^{(i)}$ and the algorithm's randomness, we
	can obtain
	\begin{align}
		|\E[F(W_T) - F_S(W_T)]| \leq O\Big(\frac{1}{N}\Big).
	\end{align}

\end{proof}

\section{ Convergence Analysis}
In this section, we provide the detailed convergence analysis for our MiMuon optimizer
under some mild conditions.

\begin{lemma} \label{lem:mb2} (\textbf{Restatement of Lemma~\ref{lem:mb}})
	Assume the sequence $\{W_t,M_t\}_{t=0}^T$ is generated from Algorithm~\ref{alg:2}.
	Under the Assumptions~\ref{ass:s1},~\ref{ass:v}, given
	$\|W_0\|_\F \leq \eta\hat{G} $ and $\lambda \leq \frac{1}{2(1+T)\eta\hat{G}}$, we have
	\begin{align}
		& \|W_{t}-W_{t-1}\|_\F \leq \eta\breve{G}, \quad \|W_{t}\|_\F \leq (t+1)\eta\hat{G}, \ \forall t\geq 1\nonumber \\
		& \frac{1}{T+1}\sum_{t=0}^{T} \mathbb{E} \|\nabla F(W_t) - M_{t+1}\|_\F  \leq \frac{\sigma}{\sqrt{T\beta}} +  \frac{\sqrt{2}}{\beta}L\eta\breve{G}  + \sqrt{\beta}\sigma,
	\end{align}
	where $\hat{G}=\max(G,\sqrt{r})$ and $\breve{G}=\hat{G}+\frac{1}{2}$.
\end{lemma}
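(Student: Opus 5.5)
The lemma has three claims. I would prove the two deterministic bounds together by induction on $t$, and then handle the momentum-error bound with a standard unrolled-recursion argument.

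\textbf{Step 1: per-step bound.} By (\ref{eq:7}), $W_t-W_{t-1}=-\eta(\mathcal{G}(M_t,\alpha)+\lambda W_{t-1})$.
\begin{itemize}
\item In the orthogonalized branch, $\|U_tV_t^\top\|_\F=\sqrt{\mathrm{rank}(M_t)}\leq\sqrt{r}\leq\hat{G}$.
\item In the gradient branch, $\mathcal{G}(M_t,1)=M_t$ is a convex combination of stochastic gradients, since $M_0=0$ and $\beta\in(0,1]$. Assumption~\ref{ass:g} gives $\|\nabla f(W;\xi)\|_\F\leq G$, so $\|M_t\|_\F\leq G\leq\hat{G}$.
\end{itemize}
This is the one place Assumption~\ref{ass:g} is needed, although the lemma cites only Assumptions~\ref{ass:s1} and~\ref{ass:v}. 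I would note explicitly that it is used.

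For the weight-decay term, the induction hypothesis $\|W_{t-1}\|_\F\leq t\eta\hat{G}$ together with $\lambda\leq\frac{1}{2(1+T)\eta\hat{G}}$ gives
\begin{align}
\eta\lambda\|W_{t-1}\|_\F\leq \eta\cdot\frac{t\eta\hat{G}}{2(1+T)\eta\hat{G}}\leq\frac{\eta}{2}. \nonumber
\end{align}
Hence $\|W_t-W_{t-1}\|_\F\leq\eta(\hat{G}+\tfrac12)=\eta\breve{G}$.

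\textbf{Step 2: norm of the iterates.} From $W_t=(1-\eta\lambda)W_{t-1}-\eta\mathcal{G}(M_t,\alpha)$ with $0\leq 1-\eta\lambda\leq 1$, I get $\|W_t\|_\F\leq\|W_{t-1}\|_\F+\eta\hat{G}$. Starting from $\|W_0\|_\F\leq\eta\hat{G}$, this closes the induction with $\|W_t\|_\F\leq(t+1)\eta\hat{G}$. The base case of Step 1 ($t=1$) uses $\|W_0\|_\F\leq\eta\hat{G}$ in the same way.

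\textbf{Step 3: momentum error.} Let $E_t=M_{t+1}-\nabla F(W_t)$. Since $M_{t+1}=\beta\nabla f(W_t;\xi_{t+1})+(1-\beta)M_t$,
\begin{align}
E_t=(1-\beta)E_{t-1}+(1-\beta)\big(\nabla F(W_{t-1})-\nabla F(W_t)\big)+\beta\big(\nabla f(W_t;\xi_{t+1})-\nabla F(W_t)\big). \nonumber
\end{align}
Unrolling gives $E_t=(1-\beta)^{t}E_0+\sum_{s}(1-\beta)^{t-s}[\text{drift}_s+\beta\,\text{noise}_s]$. The three pieces are bounded as follows.
\begin{itemize}
\item The drift terms are bounded by Assumption~\ref{ass:s1} and Step 1: $\|\nabla F(W_{s-1})-\nabla F(W_s)\|_\F\leq L\eta\breve{G}$. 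Summing the geometric series gives at most $L\eta\breve{G}/\beta$.
\item The noise terms are martingale differences. Using $\E\|\cdot\|\leq\sqrt{\E\|\cdot\|^2}$ and orthogonality of increments, their contribution is at most $\beta\sigma\sqrt{\sum_s(1-\beta)^{2(t-s)}}\leq\beta\sigma/\sqrt{\beta}=\sqrt{\beta}\sigma$.
\item The initial term satisfies $E_0=\beta\nabla f(W_0;\xi_1)-\nabla F(W_0)$. I would instead track the noise from the start, writing $M_1-\nabla F(W_0)$ as a noise term plus a bias $-(1-\beta)\nabla F(W_0)$, or else absorb it. Averaging its geometric decay over $t$ gives a $\frac{1}{T\beta}$-type term, which I would bound by $\frac{\sigma}{\sqrt{T\beta}}$ after using Jensen on the averaged squared norms.
\end{itemize}
To match the stated constant $\sqrt{2}$, I would work with squared norms, using $\|a+b\|^2\leq2\|a\|^2+2\|b\|^2$ to separate drift from noise, and then take square roots.

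\textbf{Main obstacle.} I expect the initialization term to be the main difficulty. With $M_0=0$, the bias $(1-\beta)^t\nabla F(W_0)$ is not controlled by $\sigma$. I would try first to bound it via $\|\nabla F(W_0)\|_\F\leq G$, which gives $\frac{G}{T\beta}$ after averaging, and then argue that it is dominated by, or can be folded into, the $\frac{\sigma}{\sqrt{T\beta}}$ term up to constants. If that fails, the statement needs an additional $O(\frac{G}{T\beta})$ term.
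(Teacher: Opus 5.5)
Your Steps 1--2 follow the paper. The paper likewise uses Assumption~\ref{ass:g} to get $\|M_t\|_\F\le G$, even though the lemma cites only Assumptions~\ref{ass:s1} and~\ref{ass:v}. Your single induction with the uniform bound $\|\mathcal{G}(M_t,\alpha)\|_\F\le\hat G$ handles iterations that switch branches more cleanly than the paper's separate per-branch unrolling.

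Step 3 takes a different route. The paper works only with second moments. With $\Theta_t=\nabla F(W_t)-M_{t+1}=-E_t$, it uses Young's inequality with parameter $\beta$ to get $\E\|\Theta_t\|_\F^2\le(1-\beta)\E\|\Theta_{t-1}\|_\F^2+\frac{2}{\beta}L^2\eta^2\breve{G}^2+\beta^2\sigma^2$. It then unrolls, averages over $t$, and applies Jensen and $\sqrt{a+b+c}\le\sqrt a+\sqrt b+\sqrt c$. This is where both the $\sqrt2$ and the $\sigma/\sqrt{T\beta}$ term come from. Your first-moment unrolling bounds the drift pathwise and the noise as a martingale in $L^2$. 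It is valid and slightly sharper: it gives $L\eta\breve{G}/\beta$, so you do not need squared norms to reach $\sqrt2$. The noise part of $E_0$ is simply the $s=0$ term of your martingale sum.

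The obstacle you flag is genuine, and your first plan for it cannot work. The bias cannot be folded into $\sigma/\sqrt{T\beta}$, because with $M_0=0$ and $\beta<1$ the inequality as stated is false. A counterexample:
\begin{itemize}
\item Take $\sigma=0$, i.e.\ $f(\cdot;\xi)=F$, for instance $F(W)=\sqrt{1+\|W-C\|_\F^2}$ with $C\neq0$, so $G=L=1$.
\item Set $W_0=0$ and $\lambda=0$, fix $T$, and let $\eta\to0$.
\item Then $W_t\to0$ and $\nabla F(W_t)-M_{t+1}\to(1-\beta)^{t+1}\nabla F(0)\neq0$.
\item So the left-hand side tends to $\frac{\|\nabla F(0)\|_\F}{T+1}\sum_{t=0}^{T}(1-\beta)^{t+1}>0$, while the right-hand side equals $\frac{\sqrt2}{\beta}L\eta\breve{G}\to0$.
\end{itemize}
The paper's proof does not get around this. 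When unrolling, it simply bounds $\|\Theta_0\|_\F^2$ by $\sigma^2$. In fact $\E\|\Theta_0\|_\F^2=(1-\beta)^2\|\nabla F(W_0)\|_\F^2+\beta^2\E\|\nabla f(W_0;\xi_1)-\nabla F(W_0)\|_\F^2$.

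So your fallback is the correct conclusion. Your route proves
\[
\frac{1}{T+1}\sum_{t=0}^{T}\E\|\nabla F(W_t)-M_{t+1}\|_\F\le\frac{(1-\beta)\|\nabla F(W_0)\|_\F}{(T+1)\beta}+\frac{L\eta\breve{G}}{\beta}+\sqrt{\beta}\,\sigma,
\]
with $\|\nabla F(W_0)\|_\F\le G$ by Assumption~\ref{ass:g}. In the paper's squared-norm route, the corresponding fix is to replace $\sigma$ by $\sigma+G$ in the first term. With $\beta=T^{-1/2}$ and $\eta=T^{-3/4}$, the extra term is $O(T^{-1/2})$ in your version and $O(T^{-1/4})$ in the squared version. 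Either way the $O(T^{-1/4})$ rate in Theorem~\ref{th:3} is unaffected.
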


\begin{proof}
	Let $\hat{G}=\max(G,\sqrt{r})$.
	When $\min_{i\neq j\in S_t} |\Sigma_{t,ii}-\Sigma_{t,jj}| \geq \tau$ with $S_t=\{i|\Sigma_{t,ii} \neq 0, 1\leq i\leq d\}$, since $W_{t} = W_{t-1} - \eta (U_{t}V_{t}^\top + \lambda W_{t-1})$ from the line 8 of Algorithm~\ref{alg:2}, we have
	\begin{align} \label{eq:g1}
		\|W_t\|_\F & = \| W_{t-1} - \eta (U_{t}V_{t}^\top + \lambda W_{t-1})\|_\F \nonumber \\
		& = \| (1-\eta\lambda)W_{t-1} - \eta U_{t}V_{t}^\top \|_\F \nonumber \\
		& \leq (1-\eta\lambda) \|W_{t-1}\|_\F + \eta \| U_{t}V_{t}^\top\|_\F \nonumber \\
		& \leq (1-\eta\lambda) \|W_{t-1}\|_\F + \eta \sqrt{r} \nonumber \\
		& \leq (1-\eta\lambda)^{t}\|W_0\|_\F + t\eta\sqrt{r} \nonumber \\
		& \leq (t+1)\eta\sqrt{r},
	\end{align}
	where the first inequality is due to $\lambda \leq \frac{1}{2(1+T)\eta\hat{G}} <\frac{1}{\eta}$, and the last inequality holds by $\|W_0\|_\F \leq \eta\hat{G} \leq \eta \sqrt{r}$.
	Since $W_{t} = W_{t-1} - \eta (U_{t}V_{t}^\top + \lambda W_{t-1})$,
	then we have
	\begin{align} \label{eq:g2}
		\|W_t-W_{t-1}\|_\F & = \|\eta (U_{t}V_{t}^\top + \lambda W_{t-1})\|_F \nonumber \\
		& \leq \eta \|U_{t}V_{t}^\top\|_\F + \eta\lambda\| W_{t-1}\|_\F \nonumber \\
		& \leq \eta\sqrt{r} + \eta\lambda t \eta\sqrt{r} \nonumber \\
		& \leq \eta\sqrt{r} + \eta\frac{1}{2(1+T)\eta\sqrt{r}}t \eta\sqrt{r} \nonumber \\
		& \leq \eta(\sqrt{r}+\frac{1}{2}),
	\end{align}
	where the second last inequality is due to
	$\lambda \leq \frac{1}{2(1+T)\eta\hat{G}}\leq \frac{1}{2(1+T)\eta\sqrt{r}}$.
	
	Since $M_t$ is exponential moving average of stochastic gradients, by using Assumption~\ref{ass:g}, we have $\|M_t\|\leq G$.
	When $\min_{i\neq j\in S_t} |\Sigma_{t,ii}-\Sigma_{t,jj}| < \tau$, according to
	the line 10 of Algorithm~\ref{alg:2}, we have
	\begin{align} \label{eq:g3}
		\|W_t\|_\F & = \|W_{t-1}- \eta (M_t+\lambda W_{t-1})\|_\F \nonumber \\
		& = \|(1-\eta\lambda)W_{t-1} + \eta M_t\|_\F \nonumber \\
		& \leq (1-\eta\lambda)\|W_{t-1}\|_\F + \eta \|M_t\|_\F \nonumber \\
		& \leq (1-\eta\lambda)\|W_{t-1}\|_\F + \eta G \nonumber \\
		& \leq (1-\eta\lambda)^t\|W_{0}\|_\F + t\eta G \nonumber \\
		& \leq (t+1)\eta G,
	\end{align}
	where the last inequality is due to $\|W_{0}\|_\F \leq \eta\hat{G} \leq \eta G$.
	Since $W_{t} = W_{t-1} - \eta (M_t + \lambda W_{t-1})$,
	then we have
	\begin{align} \label{eq:g4}
		\|W_t-W_{t-1}\|_\F & = \|\eta (M_t + \lambda W_{t-1})\|_F \nonumber \\
		& \leq \eta \|M_t\|_\F + \eta\lambda\|W_{t-1}\|_\F \nonumber \\
		& \leq \eta G + \eta\lambda t \eta G\nonumber \\
		& \leq \eta G + \eta\frac{1}{2(1+T)\eta \hat{G}}t \eta G \nonumber \\
		& \leq \eta(G+\frac{1}{2}),
	\end{align}
	where the second last inequality is due to
	$\lambda \leq \frac{1}{2(1+T)\eta\hat{G}}\leq \frac{1}{2(1+T)\eta G}$.
	
	According to the above inequalities~(\ref{eq:g1}), (\ref{eq:g2}), (\ref{eq:g3})
	and (\ref{eq:g4}),
	set $\hat{G}=\max(G,\sqrt{r})$ and $\lambda \leq \frac{1}{2(1+T)\eta\hat{G}}$,
	we have $\|W_t\|_\F\leq (t+1)\eta \hat{G}$ and
	$\|W_t-W_{t-1}\|_\F \leq \eta(\hat{G}+\frac{1}{2})$ for all $t\geq1$.
	Further let $\breve{G}=\hat{G}+\frac{1}{2}$,
	we have $\|W_t-W_{t-1}\|_\F \leq \eta\breve{G}$ for all $t\geq1$.
	
	Let $\Theta_t = \nabla F(W_t) - M_{t+1}$.
	Since $M_{t+1}=\beta\nabla f(W_{t};\xi_{t+1}) + (1-\beta)M_{t}$, we have
	\begin{align}
		& \mathbb{E} \|\Theta_t\|_\F^2  \nonumber \\
		& = \mathbb{E}\big[ \|\nabla F(W_t) - M_{t+1}\|_\F^2\big] \nonumber \\
		& = \mathbb{E}\big[\|(1-\beta)(\nabla F(W_{t-1}) - M_{t}) + \beta(\nabla F(W_t)- \nabla f(W_t;\xi_{t+1})) + (1-\beta)\big( \nabla F(W_t)-\nabla F(W_{t-1})\big)\|_\F^2\big] \nonumber \\
		& \mathop{=}^{(i)} (1-\beta)^2\mathbb{E} \|\nabla F(W_{t-1}) - M_{t} +(\nabla F(W_t)-\nabla F(W_{t-1}))\|_\F^2  + \beta^2\mathbb{E} \|\nabla F(W_t)-\nabla f(W_t;\xi_{t+1}) \|_\F^2 \nonumber \\
		& \mathop{\leq}^{(ii)} (1 - \beta)^2(1 + \beta)\mathbb{E} \|\nabla F(W_{t-1})  -  M_{t}\|_\F^2  +  (1 - \beta)^2(1+\frac{1}{\beta})\mathbb{E}\|\nabla F(W_t) - \nabla F(W_{t-1})\|_\F^2   + \beta^2\sigma^2 \nonumber \\
		& \mathop{\leq}^{(iii)} (1-\beta)\mathbb{E} \|\Theta_{t-1}\|_\F^2 + \frac{2}{\beta}\mathbb{E}\|\nabla F(W_t)-\nabla F(W_{t-1})\|_\F^2  + \beta^2\sigma^2 \nonumber \\
		& \mathop{\leq}^{(iv)}  (1-\beta)\mathbb{E} \|\Theta_{t-1}\|_\F^2 + \frac{2}{\beta}L^2\mathbb{E}\| W_t - W_{t-1}\|_\F^2  + \beta^2\sigma^2 \nonumber \\
		& \leq (1-\beta)\mathbb{E} \|\Theta_{t-1}\|_\F^2 + \frac{2}{\beta}L^2\eta^2\breve{G}^2  + \beta^2\sigma^2 ,
	\end{align}
	where the equality $(i)$ holds by $\mathbb{E}[\nabla f(W_t;\xi_{t+1})]=\nabla F(W_t)$, and
	the inequality $(ii)$ holds by Young's inequality and Assumption~\ref{ass:v}, and the inequality $(iii)$ is due to $0< \beta  \leq 1$ such that  $(1-\beta )^2(1+\beta )=1-\beta -\beta ^2+
	\beta ^3\leq 1-\beta $ and $(1-\beta )^2(1+\frac{1}{\beta }) \leq 1+\frac{1}{\beta } \leq \frac{2}{\beta }$, and the inequality $(iv)$ holds by Assumption~\ref{ass:s1}, and the last inequality holds by
	$\|W_t-W_{t-1}\|_\F \leq \eta\breve{G}$.
	
	By expanding the recursion, then we have
	\begin{align} \label{eq:mb}
		\mathbb{E} \|\Theta_t\|_\F^2 & \leq (1-\beta)^t\|\Theta_{0}\|_\F^2 + ( \frac{2}{\beta}L^2\eta^2\breve{G}^2  + \beta^2\sigma^2)\sum_{s=1}^t(1-\beta)^{t-s} \nonumber \\
		& \leq (1-\beta)^t\sigma^2 +  \frac{2}{\beta^2}L^2\eta^2\breve{G}^2  + \beta\sigma^2.
	\end{align}
	
	By summing the above inequality~(\ref{eq:mb}) from $t=0$ to $T$, we can obtain
	\begin{align}
		\frac{1}{T+1}\sum_{t=0}^{T} \mathbb{E} \|\Theta_t\|_\F^2 & \leq \frac{1}{T+1}\sum_{t=0}^{T}(1-\beta)^t\sigma^2 +  \frac{2}{\beta^2}L^2\eta^2\breve{G}^2  + \beta\sigma^2 \nonumber \\
		& \leq \frac{\sigma^2}{T\beta} +  \frac{2}{\beta^2}L^2\eta^2\breve{G}^2  + \beta\sigma^2.
	\end{align}
	
	By using Jensen inequality, then we have
	\begin{align}
		\frac{1}{T+1}\sum_{t=0}^{T} \mathbb{E} \|\Theta_t\|_\F & \leq \sqrt{\frac{1}{T+1}\sum_{t=0}^T \mathbb{E} \|\Theta_t\|_\F^2} \leq \sqrt{\frac{\sigma^2}{T\beta} +  \frac{2}{\beta^2}L^2\eta^2\breve{G}^2  + \beta\sigma^2}  \nonumber \\
		&\leq \frac{\sigma}{\sqrt{T\beta}} +  \frac{\sqrt{2}}{\beta}L\eta\breve{G}  + \sqrt{\beta}\sigma,
	\end{align}
	where the last inequality is due to $\sqrt{a+b+c}\leq \sqrt{a} + \sqrt{b} +\sqrt{c}$ with $a,b,c\geq0$.
	
\end{proof}

\begin{theorem} (\textbf{Restatement of Theorem~\ref{th:3}})
	Assume the sequence $\{W_t\}_{t=0}^T$ is generated from Algorithm~\ref{alg:2}. Under the Assumptions~\ref{ass:s1},~\ref{ass:g}~\ref{ass:v}~\ref{ass:b},
	and let $\|W_0\|_\F \leq \eta\hat{G} $ and $\lambda \leq \frac{1}{2(1+T)\eta\hat{G}}$, we have
	\begin{align}
		\frac{1}{T+1} \sum_{t=0}^{T} \E \|\nabla F(W_t)\|_\F & \leq \frac{2(F(W_0)-F^*)}{T\eta}  + 4\sqrt{r}\Big( \frac{\sigma}{\sqrt{T\beta}} +  \frac{\sqrt{2}}{\beta}L\eta\breve{G}  + \sqrt{\beta}\sigma \Big) + L\eta\breve{G}^2.
	\end{align}
\end{theorem}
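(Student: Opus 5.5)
We will follow the standard descent-lemma argument for normalized/orthogonalized methods, treating the two branches of Algorithm~\ref{alg:2} separately and then combining them with Lemma~\ref{lem:mb2}. First, by Assumption~\ref{ass:s1}, for each $t$ we use
\begin{align}
F(W_{t+1}) \leq F(W_t) + \langle \nabla F(W_t), W_{t+1}-W_t\rangle + \frac{L}{2}\|W_{t+1}-W_t\|_\F^2, \nonumber
\end{align}
and bound the quadratic term by $\frac{L}{2}\eta^2\breve{G}^2$ via the first claim of Lemma~\ref{lem:mb2}. The weight-decay contribution $-\eta\lambda\langle\nabla F(W_t),W_t\rangle$ is controlled with $\|W_t\|_\F\leq (t+1)\eta\hat{G}$, $\|\nabla F(W_t)\|_\F\leq G$ (from Assumption~\ref{ass:g}) and $\lambda\leq \frac{1}{2(1+T)\eta\hat{G}}$. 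This gives a term of order $\eta G/2$, which we plan to absorb either into the $L\eta\breve{G}^2$ term or into half of the gradient-norm term.

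Next we handle the linear term, writing $\Theta_t=\nabla F(W_t)-M_{t+1}$. In the orthogonalized branch we use $\langle M_{t+1},U_{t+1}V_{t+1}^\top\rangle=\|M_{t+1}\|_*\geq\|M_{t+1}\|_\F$ together with $|\langle\Theta_t,UV^\top\rangle|\leq\|\Theta_t\|_*\leq\sqrt{r}\|\Theta_t\|_\F$. This yields
\begin{align}
-\eta\langle\nabla F(W_t),U_{t+1}V_{t+1}^\top\rangle \leq -\eta\|\nabla F(W_t)\|_\F+2\sqrt{r}\eta\|\Theta_t\|_\F. \nonumber
\end{align}
The SGD branch is the obstacle. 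There $-\eta\langle\nabla F(W_t),M_{t+1}\rangle$ only gives $-\eta\|\nabla F\|_\F^2$-type decrease rather than a linear one. Our first attempt is to exploit the branch condition: that branch is taken when the singular gap is below $\tau$, and in practice when $\|M_{t+1}\|_\F<\tau$. We would also use $\tau\geq 1$ for normalization, to argue
\begin{align}
-\langle\nabla F,M\rangle=-\|M\|_\F^2-\langle\Theta,M\rangle, \nonumber
\end{align}
combined with $\|M\|_\F\leq G$. If this does not close, we will lower bound $\|M\|_\F^2\geq \|M\|_\F\cdot c$ on the relevant regime and accept a $\sqrt{r}\|\Theta_t\|_\F$ error term. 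The aim in either case is a uniform per-step inequality
\begin{align}
F(W_{t+1})\leq F(W_t)-\tfrac{\eta}{2}\|\nabla F(W_t)\|_\F+2\sqrt{r}\eta\|\Theta_t\|_\F+\tfrac{L}{2}\eta^2\breve{G}^2, \nonumber
\end{align}
where the factor $\frac12$ absorbs the weight-decay slack.

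Finally we telescope from $t=0$ to $T$, take expectations, and use Assumption~\ref{ass:b} to replace $F(W_{T+1})$ by $F^*$. We then divide by $(T+1)\eta/2$ and insert the averaged bound on $\E\|\Theta_t\|_\F$ from Lemma~\ref{lem:mb2}. This produces the three stated terms: $\frac{2(F(W_0)-F^*)}{T\eta}$, then $4\sqrt{r}(\cdots)$, then $L\eta\breve{G}^2$.
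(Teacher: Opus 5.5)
\textbf{There is a genuine gap, in the SGD branch.} Your orthogonalized-branch estimate and the telescoping match the paper's proof. So does your handling of weight decay, provided you absorb it via $\lambda\|W_t\|_\F\le\frac{t+1}{2(T+1)}\le\frac12$ into $\frac{\eta}{2}\|\nabla F(W_t)\|_\F$. Your other option, absorbing an $\eta G/2$ term into $\frac{L}{2}\eta^2\breve{G}^2$, cannot work because that term is first order in $\eta$.

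You flag the SGD branch but do not close it, and neither fallback can close it:
\begin{itemize}
\item A bound $\|M_{t+1}\|_\F^2\ge c\|M_{t+1}\|_\F$ needs a \emph{lower} bound on $\|M_{t+1}\|_\F$. The branch condition supplies none. Under the practical test, the SGD step is taken exactly when $\|M_{t+1}\|_\F<\tau$, and $\|M_{t+1}\|_\F\le G$ is also only an upper bound.
\item $\tau\ge1$ is a hypothesis of Theorem~\ref{th:2}, not of this theorem. In any case it would only send more small-gradient steps to SGD.
\end{itemize}
Concretely, with $\lambda=0$ and $\Theta_t=0$ the SGD step yields only $F(W_{t+1})\le F(W_t)-\eta\|\nabla F(W_t)\|_\F^2+\frac{L}{2}\eta^2\breve{G}^2$. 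But $\eta\|\nabla F(W_t)\|_\F^2<\frac{\eta}{2}\|\nabla F(W_t)\|_\F$ whenever $0<\|\nabla F(W_t)\|_\F<\frac12$. With $\lambda>0$, the weight-decay slack $\frac{\eta}{2}\|\nabla F(W_t)\|_\F$ also has no linear decrease to be absorbed into on this branch. So your uniform per-step inequality is false there.

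\textbf{The stated bound itself fails, so the proof cannot be repaired as stated.} Take $m=n=2$, $\sigma=0$, $\beta=1$, $\lambda=0$, $W_0=0$, and $f(W;\xi)=F(W)=h(\|W+RI\|_\F)$. Here $h$ is the Huber function: $h(s)=\frac{s^2}{2}$ for $s\le\delta$ and $h(s)=\delta s-\frac{\delta^2}{2}$ for $s>\delta$. Then $L=1$, $G=\delta$, $F^*=0$ and $\Theta_t=0$.
\begin{itemize}
\item Each $M_{t+1}=\nabla F(W_t)=\frac{\delta}{\sqrt2}I$ has zero singular gap and $\|M_{t+1}\|_\F=\delta$, so the SGD branch is always taken. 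If you want distinct singular values, use $\mathrm{diag}(R,R+\epsilon)$ instead of $RI$: the iterates stay on the same ray and the gap stays tiny.
\item Hence $W_t=-\frac{t\eta\delta}{\sqrt2}I$, and $\|\nabla F(W_t)\|_\F=\delta$ for all $t\le T$ as long as $T\eta\delta<\sqrt2R-\delta$.
\item With $\delta=0.1$, $R=1$, $\eta=10^{-4}$, $T=1.3\times10^5$, the left-hand side is $0.1$.
\item The right-hand side is $\frac{2(\sqrt2R\delta-\delta^2/2)}{T\eta}+\eta(8\breve{G}+\breve{G}^2)\approx0.021+0.002$.
\end{itemize}

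The paper's own proof passes this branch only through the step $-\eta\langle\nabla F(W_{t-1}),M_t\rangle=-\frac{\eta}{2}\|\nabla F(W_{t-1})\|_\F-\frac{\eta}{2}\|M_t\|_\F+\frac{\eta}{2}\|\nabla F(W_{t-1})-M_t\|_\F$. That is the polarization identity with the squares dropped, and it is false for small gradients, so following the paper will not help you.

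A correct result has to weaken the conclusion on SGD steps. For instance, under the practical test you can use $\|\nabla F(W_t)\|_\F<\tau+\|\Theta_t\|_\F$ on those steps. This costs an extra $2\tau$ on the right-hand side and replaces $4\sqrt r$ by $\max\{4\sqrt r,2(\tau+1)\}$.
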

\begin{proof}
	When $\min_{i\neq j\in S_t} |\Sigma_{t,ii}-\Sigma_{t,jj}| \geq \tau$, according to the line 8 of Algorithm~\ref{alg:2}, we have $W_{t} = W_{t-1} - \eta (U_{t}V_{t}^\top+\lambda W_{t-1})$ for all $t\geq1$.
	By using Assumption~\ref{ass:s1}, then we have
	\begin{align} \label{eq:c1}
		F(W_{t}) & \leq F(W_{t-1})+ \langle\nabla F(W_{t-1}), W_{t}-W_{t-1} \rangle+\frac{L}{2}\|W_{t}-W_{t-1} \|_\F^2 \nonumber \\
		\leq & F(W_{t-1})- \eta\langle\nabla F(W_{t-1}), U_tV_t^\top + \lambda W_{t-1}\rangle + \frac{L}{2}\eta^2\breve{G}^2 \nonumber \\
		\leq & F(W_{t-1}) - \eta\langle M_t, U_tV_t^\top \rangle - \eta\lambda\langle \nabla F(W_{t-1}), W_{t-1}\rangle -\eta\langle \nabla F(W_{t-1})-M_t , U_tV_t^\top\rangle + \frac{L}{2}\eta^2\breve{G}^2   \nonumber \\
		\leq& F(W_{t-1}) - \eta\langle M_t, U_tV_t^\top \rangle + \eta\lambda\|\nabla F(W_{t-1})\|_\F \|W_{t-1}\|_\F -\eta\langle \nabla F(W_{t-1})-M_t , U_tV_t^\top\rangle + \frac{L}{2}\eta^2\breve{G}^2    \nonumber  \\
		\mathop{\leq}^{(i)} & F(W_{t-1})- \eta\|M_t\|_* + \eta\lambda t\eta \hat{G}\|\nabla F(W_{t-1})\|_\F +\eta \sqrt{r}\|\nabla F(W_{t-1})-M_t\|_\F \nonumber \\
		&\quad + \frac{L}{2}\eta^2\breve{G}^2  \nonumber \\
		\mathop{\leq}^{(ii)} & F(W_{t-1})- \eta\|M_t\|_* + \frac{\eta}{2} \|\nabla F(W_{t-1})\|_\F  + \eta\sqrt{r}\|\nabla F(W_{t-1})-M_t\|_\F + \frac{L}{2}\eta^2\breve{G}^2  \nonumber \\
		\mathop{\leq}^{(iii)} & F(W_{t-1})- \eta\|\nabla F(W_{t-1})\|_* + \frac{\eta}{2} \|\nabla F(W_{t-1})\|_\F  + 2\eta\sqrt{r}\|\nabla F(W_{t-1})-M_t\|_\F + \frac{L}{2}\eta^2\breve{G}^2  \nonumber \\
		\leq & F(W_{t-1})- \frac{\eta}{2}\|\nabla F(W_{t-1})\|_*  + 2\eta\sqrt{r}\|\nabla F(W_{t-1})-M_t\|_\F + \frac{L}{2}\eta^2\breve{G}^2,
	\end{align}
	where the first inequality is due to $\|W_{t}-W_{t-1}\|_\F \leq \eta\breve{G}$, and the above inequality $(i)$ holds by $\|W_{t-1}\|_\F \leq t\eta\hat{G}$ and $\langle M_t , U_tV_t^\top \rangle =\|M_t\|_*$, and
	the above inequality $(ii)$ is due to $\lambda \leq \frac{1}{2(1+T)\eta\hat{G}}$,
	and the above inequality $(iii)$
	holds by $\|M_t\|_*\geq \|\nabla F(W_{t-1})\|_* - \|\nabla F(W_{t-1})-M_t\|_*\geq \|\nabla F(W_{t-1})\|_* - \sqrt{r}\|\nabla F(W_{t-1})-M_t\|_\F$ and $\|M_t\|_\F\leq \|M_t\|_*\leq \sqrt{r}\|M_t\|_\F$.
	
	Then we rewrite the above inequality~(\ref{eq:c1}), and take expectation on this inequality as follows:
	\begin{align} \label{eq:c2}
		\E \|\nabla F(W_{t-1})\|_\F \leq \E \|\nabla F(W_{t-1})\|_* & \leq \E \big[ \frac{2(F(W_{t-1})-F(W_{t}))}{\eta} \big] \nonumber \\
		& \quad + 4\sqrt{r} \E \|\nabla F(W_{t-1})-M_t\|_\F + L\eta\breve{G}^2.
	\end{align}
	
	When $\min_{i\neq j\in S_t} |\Sigma_{t,ii}-\Sigma_{t,jj}| < \tau$, according to the line 10 of Algorithm~\ref{alg:2}, we have $W_{t} = W_{t-1} - \eta (M_t + \lambda W_{t-1})$.
	By using Assumption~\ref{ass:s1}, then we have
	\begin{align} \label{eq:c3}
		F(W_{t}) \leq & F(W_{t-1})+ \langle\nabla F(W_{t-1}), W_{t}-W_{t-1} \rangle+\frac{L}{2}\|W_{t}-W_{t-1}\|_\F^2 \nonumber \\
		\leq & F(W_{t-1})- \eta\langle\nabla F(W_{t-1}), M_t + \lambda W_{t-1}\rangle + \frac{L}{2}\eta^2\breve{G}^2 \nonumber \\
		\leq & F(W_{t-1}) - \eta\langle\nabla F(W_{t-1}), M_t \rangle - \eta\lambda\langle \nabla F(W_{t-1}), W_{t-1}\rangle + \frac{L}{2}\eta^2\breve{G}^2  \nonumber \\
		\mathop{\leq}^{(i)} & F(W_{t-1}) - \frac{\eta}{2}\|\nabla F(W_{t-1})\|_\F- \frac{\eta}{2}\|M_t\|_\F + \frac{\eta}{2}\|\nabla F(W_{t-1})-M_t\|_\F \nonumber \\
		&\quad + \eta\lambda\|\nabla F(W_{t-1})\|_\F \|W_{t-1}\|_\F + \frac{L}{2}\eta^2\breve{G}^2  \nonumber \\
		\mathop{\leq}^{(ii)} & F(W_{t-1}) - \frac{\eta}{2}\|\nabla F(W_{t-1})\|_\F - \frac{\eta}{2}\|M_t\|_\F+ \frac{\eta}{2}\|\nabla F(W_{t-1})-M_t\|_\F \nonumber \\
		&\quad + \eta\lambda t\eta \hat{G}\|\nabla F(W_{t-1})\|_\F+ \frac{L}{2}\eta^2\breve{G}^2  \nonumber \\
		\mathop{\leq}^{(iii)} & F(W_{t-1})  - \frac{\eta}{2}\|M_t\|_\F+ \frac{\eta}{2}\|\nabla F(W_{t-1})-M_t\|_\F + \frac{L}{2}\eta^2\breve{G}^2  \nonumber \\
		\leq & F(W_{t-1})  - \frac{\eta}{2}\|\nabla F(W_{t-1}) \|_\F + \eta\|\nabla F(W_{t-1})-M_t\|_\F + \frac{L}{2}\eta^2\breve{G}^2,
	\end{align}
	where the above inequality $(i)$ is due to $-\eta\langle\nabla F(W_{t-1}), M_t \rangle = - \frac{\eta}{2}\|\nabla F(W_{t-1})\|_\F- \frac{\eta}{2}\|M_t\|_\F + \frac{\eta}{2}\|\nabla F(W_{t-1})-M_t\|_\F$ and $- \eta\lambda\langle \nabla F(W_{t-1}), W_{t-1}\rangle \leq \eta\lambda\|\nabla F(W_{t-1})\|_\F \|W_{t-1}\|_\F$, and the above inequality $(ii)$ holds by
	$\|W_{t-1}\|_\F \leq t\eta\hat{G}$, and the above inequality $(iii)$ holds by $\lambda \leq \frac{1}{2(1+T)\eta\hat{G}}$.
	
	Then we rewrite the above inequality~(\ref{eq:c3}), and take expectation on this inequality as follows:
	\begin{align} \label{eq:c4}
		\E \|\nabla F(W_{t-1})\|_\F  \leq \E \big[ \frac{2(F(W_{t-1})-F(W_{t}))}{\eta} \big]  + 2 \E\|\nabla F(W_{t-1})-M_t\|_\F + L\eta\breve{G}^2.
	\end{align}
	
	According the above inequalities~(\ref{eq:c2}) and~(\ref{eq:c4}), since $r\geq 1$, we have for all $t\geq 1$
	\begin{align} \label{eq:c5}
		\E \|\nabla F(W_{t-1})\|_\F  \leq \E \big[ \frac{2(F(W_{t-1})-F(W_{t}))}{\eta} \big] + 4\sqrt{r}\E \|\nabla F(W_{t-1})-M_t\|_\F + L\eta\breve{G}^2.
	\end{align}
	
	Then we have
	\begin{align}
		& \frac{1}{T+1} \sum_{t=1}^{T+1} \E \|\nabla F(W_{t-1})\|_\F \nonumber \\
		& \leq \frac{1}{T+1} \sum_{t=1}^{T+1} \E \big[ \frac{2(F(W_{t-1})-F(W_{t}))}{\eta} \big] + \frac{4\sqrt{r}}{T+1} \sum_{t=1}^{T+1} \E \|\nabla F(W_{t-1})-M_t\|_\F + L\eta\breve{G}^2 \nonumber \\
		& \leq \frac{2(F(W_0)-F^*)}{T\eta}  + \frac{4\sqrt{r}}{T+1} \sum_{t=1}^{T+1}\E \|\nabla F(W_{t-1})-M_t\|_\F + L\eta\breve{G}^2 \nonumber \\
		& \leq \frac{2(F(W_0)-F^*)}{T\eta}  + 4\sqrt{r}\Big( \frac{\sigma}{\sqrt{T\beta}} +  \frac{\sqrt{2}}{\beta}L\eta\breve{G}  + \sqrt{\beta}\sigma \Big) + L\eta\breve{G}^2,
	\end{align}
	where the second inequality holds by Assumption~\ref{ass:b}, and the last inequality is due to
	Lemma~\ref{lem:mb2} and $\frac{1}{T+1} \sum_{t=1}^{T+1}\E \|\nabla F(W_{t-1})-M_t\|_\F=\frac{1}{T+1} \sum_{t=0}^{T}\E \|\nabla F(W_{t})-M_{t+1}\|_\F$.
	
	Since $\frac{1}{T+1} \sum_{t=0}^{T} \E \|\nabla F(W_t)\|_\F = \frac{1}{T+1} \sum_{t=1}^{T+1} \E \|\nabla F(W_{t-1})\|_\F$, we have
	\begin{align}
		\frac{1}{T+1} \sum_{t=0}^{T} \E \|\nabla F(W_t)\|_\F  \leq \frac{2(F(W_0)-F^*)}{T\eta}  + 4\sqrt{r}\Big( \frac{\sigma}{\sqrt{T\beta}} +  \frac{\sqrt{2}}{\beta}L\eta\breve{G}  + \sqrt{\beta}\sigma \Big) + L\eta\breve{G}^2.
	\end{align}
	
	Given $\eta=O(\frac{1}{T^{\frac{3}{4}}})$ and $\beta=\frac{1}{\sqrt{T}}$, we can obtain
	\begin{align}
		\frac{1}{T+1} \sum_{t=0}^T \E \|\nabla F(W_t)\|_\F
		& \leq \frac{2(F(W_0)-F^*)}{T\eta}  + 4\sqrt{r}\Big( \frac{\sigma}{\sqrt{T\beta}} +  \frac{\sqrt{2}}{\beta}L\eta\breve{G}  + \sqrt{\beta}\sigma \Big) + L\eta\breve{G}^2 \nonumber \\
		& = O(\frac{1}{T^{\frac{1}{4}}}) + O(\frac{1}{T^{\frac{1}{4}}}) + O(\frac{1}{T^{\frac{1}{4}}})+ O(\frac{1}{T^{\frac{1}{4}}}) + O(\frac{1}{T^{\frac{3}{4}}}) = O(\frac{1}{T^{\frac{1}{4}}}).
	\end{align}
	
\end{proof}

\section{Hyperparameter Settings and Search Ranges}
\label{app:hyperparams}

In this section, we report the key hyperparameters used in the main experiments and the ranges considered during tuning. For Muon-style optimizers, auxiliary parameters refer to parameters that are not eligible for matrix orthogonalization, such as embeddings, normalization parameters, biases, and task heads.
Note that our MiMuon algorithm uses the same  momentum-based gradient estimate as in Muon algorithm~\citep{jordanmuon} in the experiments, so our MiMuon algorithm use the momentum parameter $\mu$ instead of $\beta$. 

\begin{table}[t]
	\centering
	\caption{Hyperparameter search space for \textbf{Qwen3-0.6B}.}
	\label{tab:qwen_search_ranges}
	\setlength{\tabcolsep}{10pt}
	\renewcommand{\arraystretch}{1.16}
	\begin{tabularx}{0.95\textwidth}{>{\raggedright\arraybackslash}X>{\centering\arraybackslash}X}
		\toprule
		Learning rate & $\{2\mathrm{e}{-5}, 5\mathrm{e}{-5}, 1\mathrm{e}{-4}, 3\mathrm{e}{-4}, 5\mathrm{e}{-4}\}$ \\
		Muon LR & $\{0.010, 0.015, 0.020, 0.025\}$ \\
		Auxiliary LR & $\{5\mathrm{e}{-5}, 1\mathrm{e}{-4}, 3\mathrm{e}{-4}\}$ \\
		Weight decay & $\{0.05, 0.1, 0.2, 0.3\}$ \\
		Threshold $\tau$ & $\{0.002, 0.005, 0.01, 0.02\}$ \\
		MuSGD weights $(w_{\mu}, w_{\mathrm{sgd}})$ & $\{(0.2,1.0), (0.7,0.4), (0.8,0.2)\}$ \\
		\bottomrule
	\end{tabularx}
\end{table}

\begin{table}[t]
	\centering
	\caption{Hyperparameter search space for \textbf{YOLO26m}.}
	\label{tab:yolo_search_ranges}
	\setlength{\tabcolsep}{10pt}
	\renewcommand{\arraystretch}{1.16}
	\begin{tabularx}{0.95\textwidth}{>{\raggedright\arraybackslash}X>{\centering\arraybackslash}X}
		\toprule
		Learning rate & $\{1\mathrm{e}{-4}, 3\mathrm{e}{-4}, 1\mathrm{e}{-3}, 3\mathrm{e}{-3}\}$ \\
		Muon LR & $\{0.010, 0.015, 0.020, 0.025\}$ \\
		Auxiliary LR & $\{5\mathrm{e}{-4}, 1\mathrm{e}{-3}, 2\mathrm{e}{-3}\}$ \\
		Weight decay & $\{1\mathrm{e}{-4}, 5\mathrm{e}{-4}, 1\mathrm{e}{-3}, 5\mathrm{e}{-3}\}$ \\
		Threshold $\tau$ & $\{0.002, 0.005, 0.01, 0.02\}$ \\
		MuSGD weights $(w_{\mu}, w_{\mathrm{sgd}})$ & $\{(0.2,1.0), (0.7,0.4), (0.8,0.2)\}$ \\
		\bottomrule
	\end{tabularx}
\end{table}

\begin{table*}[t]
	\centering
	\caption{Selected hyperparameters and setup for \textbf{Qwen3-0.6B}.}
	\label{tab:qwen_hparams}
	\resizebox{0.92\textwidth}{!}{
	\begin{tabular}{lcccccc}
		\toprule
		& AdamW & Lion & Shampoo & Muon & MuSGD & MiMuon \\
		\midrule
		Main learning rate & $1\mathrm{e}{-4}$ & $5\mathrm{e}{-5}$ & $3\mathrm{e}{-4}$ & 0.02 & 0.015 & 0.02 \\
		Auxiliary learning rate & -- & -- & -- & $1\mathrm{e}{-4}$ & $3\mathrm{e}{-4}$ & $1\mathrm{e}{-4}$ \\
		Weight decay & 0.1 & 0.2 & 0.1 & 0.1 & 0.05 & 0.08 \\
		Momentum parameters & $(0.9,0.95)$ & $(0.95,0.98)$ & $(0.9,0.95)$ & $\mu=0.95$ & $\mu=0.95$ & $\mu=0.95$ \\
		Newton--Schulz steps & -- & -- & -- & 5 & -- & 5 \\
		Shampoo max dimension & -- & -- & 768 & -- & -- & -- \\
		MuSGD $w_{\mu}$ & -- & -- & -- & -- & 0.7 & -- \\
		MuSGD $w_{\mathrm{sgd}}$ & -- & -- & -- & -- & 0.4 & -- \\
		MiMuon threshold $\tau$ & -- & -- & -- & -- & -- & 0.005 \\
		\midrule
		Dataset & \multicolumn{6}{c}{WikiText-103} \\
		Sequence length & \multicolumn{6}{c}{1024} \\
		Global batch size & \multicolumn{6}{c}{16 sequences} \\
		Warmup steps & \multicolumn{6}{c}{300} \\
		LR scheduler & \multicolumn{6}{c}{cosine} \\
		Initialization & \multicolumn{6}{c}{Scratch} \\
		\bottomrule
	\end{tabular}
}
\end{table*}

\begin{table*}[t]
	\centering
	\caption{Selected hyperparameters and setup for \textbf{YOLO26m}.}
	\label{tab:yolo_hparams}
    \resizebox{0.92\textwidth}{!}{
	\begin{tabular}{lcccccc}
		\toprule
		& AdamW & Lion & Shampoo & Muon & MuSGD & MiMuon \\
		\midrule
		Main learning rate & $1\mathrm{e}{-3}$ & $3\mathrm{e}{-4}$ & $1\mathrm{e}{-3}$ & 0.02 & 0.02 & 0.02 \\
		Auxiliary learning rate & -- & -- & -- & $1\mathrm{e}{-3}$ & $2\mathrm{e}{-3}$ & $1\mathrm{e}{-3}$ \\
		Weight decay & $5\mathrm{e}{-4}$ & $1\mathrm{e}{-3}$ & $1\mathrm{e}{-4}$ & $5\mathrm{e}{-4}$ & $1\mathrm{e}{-3}$ & $1\mathrm{e}{-3}$ \\
		Momentum parameters & $(0.9,0.999)$ & $(0.9,0.99)$ & -- & $\mu=0.95$ & $\mu=0.95$ & $\mu=0.95$ \\
		Newton--Schulz steps & -- & -- & -- & 5 & -- & 5 \\
		Shampoo update frequency & -- & -- & 20 & -- & -- & -- \\
		MuSGD $w_{\mu}$ & -- & -- & -- & -- & 0.7 & -- \\
		MuSGD $w_{\mathrm{sgd}}$ & -- & -- & -- & -- & 0.4 & -- \\
		MiMuon threshold $\tau$ & -- & -- & -- & -- & -- & 0.01 \\
		\midrule
		Dataset & \multicolumn{6}{c}{Pascal VOC} \\
		Image size & \multicolumn{6}{c}{640} \\
		Global batch size & \multicolumn{6}{c}{32 images} \\
		Warmup steps & \multicolumn{6}{c}{500} \\
		LR scheduler & \multicolumn{6}{c}{cosine} \\
		Initialization & \multicolumn{6}{c}{Scratch} \\
		\bottomrule
	\end{tabular}
}
\end{table*}

\end{document}